\setlist[itemize]{noitemsep, topsep=0pt, partopsep=0pt}
\setlist[enumerate]{noitemsep, topsep=0pt, partopsep=0pt}
\newcolumntype{B}{>{\bfseries}S[table-format=1.5]}
\def\Cref{\ref}
\newcommand{\GRASS}{\textsc{Grass}}
\newcommand{\Flora}{\textsc{Flora}}
\newcommand{\GaLore}{\textsc{GaLore}}
\newcommand{\MeSO}{MeSO}
\theoremstyle{plain}
\newtheorem{theorem}{Theorem}[section]
\newtheorem*{restate}{Theorem}
\newtheorem{lemma}[theorem]{Lemma}
\theoremstyle{definition}
\theoremstyle{remark}
\renewcommand{\ALG@beginalgorithmic}{\small}
\algrenewcommand\alglinenumber[1]{\tiny #1:}
\algrenewcommand\algorithmicrequire{\textbf{Input:}}
\algrenewcommand\algorithmicensure{\textbf{Output:}}
\newcommand{\algcomment}[1]{\hfill\(\triangleright\) \textit{\textcolor{gray}{#1}}}
\title{\GRASS: Compute Efficient Low-Memory LLM Training with Structured Sparse Gradients}
\author{
  Aashiq Muhamed\textsuperscript{1}, 
  Oscar Li\textsuperscript{2},
  David Woodruff\textsuperscript{3},\\
  \textbf{Mona Diab\textsuperscript{1}, 
  Virginia Smith\textsuperscript{2}} \\
  \{amuhamed, runlianl, dwoodruf, mdiab, smithv\}@andrew.cmu.edu
  \\
   \textsuperscript{1} Language Technologies Institute,
  \textsuperscript{2} Machine Learning Department \\
  \textsuperscript{3} Department of Computer Science
  \\
  Carnegie Mellon University
  }
\newcommand{\customlabel}[1]{%
  \refstepcounter{algorithm}%
  \label{#1}%
  \textbf{Algorithm \thealgorithm} %
}
\begin{document}
\maketitle

\begin{abstract}

\looseness=-1
Large language model (LLM) training and finetuning are often bottlenecked by limited GPU memory. While existing projection-based optimization methods address this by projecting gradients into a lower-dimensional subspace to reduce optimizer state memory, they typically rely on \textit{dense} projection matrices, which can introduce computational and memory overheads. In this work, we propose \textsc{Grass} (GRAdient Stuctured Sparsification), a novel approach that leverages \textit{sparse} projections to transform gradients into structured sparse updates. This design not only significantly reduces memory usage for optimizer states but also minimizes gradient memory footprint, computation, and communication costs, leading to substantial throughput improvements. Extensive experiments on pretraining and finetuning tasks demonstrate that \textsc{Grass} achieves competitive performance to full-rank training and existing projection-based methods. Notably, \textsc{Grass} enables half-precision pretraining of a 13B parameter LLaMA model on a single 40GB A100 GPU---a feat infeasible for previous methods---and yields up to a $2\times$ throughput improvement on an 8-GPU system. Code can be found at \url{https://github.com/aashiqmuhamed/GRASS}.

\end{abstract}

\section{Introduction}
\looseness=-1
Pretraining and finetuning large language models (LLMs) are often memory bottlenecked: storing model parameters, activations, gradients, and optimizer states in GPU memory is prohibitively expensive.  As an example, pretraining a LLaMA-13B model from scratch under full bfloat16 precision with a token batch size of 256 requires at least 102 GB memory (24GB for trainable parameters, 49GB for Adam optimizer states, 24GB for weight gradients, and 2GB for activations), making training infeasible even on professional-grade GPUs such as Nvidia A100 with 80GB memory \citep{A100}. Existing memory efficient system-level techniques like DeepSpeed optimizer sharding/offloading \citep{deepspeed} and gradient checkpointing \citep{chen2016training} trade off throughput for memory advantages which slow down pretraining.
As models scale, the memory and compute demands of increasingly large LLMs continue to outpace hardware advancements, highlighting the need for advances in optimization algorithms beyond system-level techniques.

\looseness=-1
 Various optimization techniques have been proposed to enhance the efficiency of LLM training. One prominent approach is parameter-efficient finetuning (PEFT), such as Low-Rank Adaptation (LoRA), which reparameterizes weight matrices using low-rank adaptors \citep{hu2021lora}. This significantly reduces the number of trainable parameters, yielding smaller optimizer states and gradients.
 However, despite its efficiency, LoRA and its derivatives \citep{sheng2023slora, zhang2023lorafa, xia2024chain} often underperform compared to full-rank finetuning \citep{biderman2024lora}.  Variants like ReLoRA \citep{Lialin2023ReLoRAHT} extend LoRA to pretraining by periodically updating the full matrix with new low-rank updates, but it still requires a costly initial full-rank training warmup which makes it impractical in memory-constrained scenarios.

\begin{table*}[tb]
\begin{minipage}[t]{0.48\linewidth}
    \vspace{-0.2in} %
    \begin{algorithm}[H]
    \small
    \caption{\small Memory-efficient Subspace Optimization}
    \label{alg:meso}
    \begin{algorithmic}[1]
    \Require Initial weights $W_0 \in \mathbb{R}^{m \times n}$ with $m \leq n$; update frequency $K$; total iterations $T$; subspace rank $r$ with $r \ll m$, an off-the-shelf optimizer \texttt{opt}; function to update the optimizer state, scale factor $\alpha$.
    \Ensure Optimized weights $W^{(T)}$
    \State $t \leftarrow 0$
    \State $W^{(0)} \leftarrow W_0$  \algcomment{Set initial weights $W_0 \in \mathbb{R}^{m \times n}$}
    \State $S^{(0)} \leftarrow \texttt{opt.init}(0^{r \times n})$ \algcomment{Adam state $\in \mathbb{R}^{2 \times r \times n}$}
    \While{$t \leq T$}
        \If{$t \equiv 0 \pmod{K} $}
            \State \textcolor{magenta}{// Compute new projection matrix}
            \State $P \leftarrow$ \tikz[baseline]{\node[fill=green!20,anchor=base, inner sep=0pt, outer sep=0pt]{$\texttt{compute}_P$ $(\nabla L(W^{(t)}))$ }  } \label{alg:line:computep} \algcomment{$P \in \mathbb{R}^{m \times r}$}
            \State \textcolor{magenta}{// [Optional] Update optimizer state}
            \State $S^{(t)} \leftarrow$ \tikz[baseline]{\node[fill=yellow!20,anchor=base, inner sep=0pt, outer sep=0pt] {$\texttt{update\_state}(S^{(t)})$}}
        \EndIf
        \State $G_C \leftarrow$ \tikz[baseline] {\node[fill=orange!20,anchor=base, inner sep=0pt, outer sep=0pt]{$P^\top \nabla L(W^{(t)})$};} \label{alg:line:nonprojstart} \algcomment{$G_C \in \mathbb{R}^{r \times n}$}
        \State $S^{(t+1)}, \Delta^{(t+1)} \leftarrow \texttt{opt.update}(S^{(t)}, G_C)$
        \State $W^{(t+1)} \leftarrow $\tikz[baseline]  {\node[fill=blue!20,anchor=base, inner sep=0pt, outer sep=0pt]{$W^{(t)} + \alpha P \Delta^{(t+1)}$};} \label{alg:line:nonprojend} \algcomment{Apply update}
        \State $t \leftarrow t + 1$
    \EndWhile
    \end{algorithmic}
    \end{algorithm}
\end{minipage}%
\hfill
\begin{minipage}[t]{0.48\linewidth}
    \vspace{0pt} %
    \small
    \tcbset{
        enhanced,
        colback=white,
        colframe=gray!50!black,
        boxrule=0.5pt,
        arc=4pt,
        top=4pt,
        bottom=4pt,
        left=0pt,
        right=6pt,
        boxsep=2pt,
        bicolor,
        sharp corners,
        width=\linewidth,
        left=0mm,
        right=0mm,
        before=\noindent,
        fontlower=\small, %
        coltitle=black, %
        colbacktitle=white, %
    }
    \begin{tcolorbox}[title=\customlabel{alg:custommeso} MeSO Implementations]
        \textcolor{blue}{\Flora{} } \\
        \looseness=-1
        \tikz[baseline]{\node[fill=green!20, anchor=base, inner sep=0pt, outer sep=0pt] {\textbf{Compute dense \( P \):}};}
        Sample \( P_{ij} \) \textit{i.i.d.} from \(\mathcal{N}(0, 1/r)\). \\
        \tikz[baseline]{\node[fill=yellow!20, anchor=base, inner sep=0pt, outer sep=0pt] {\textbf{Update\_state:}};}
        Updates momentum as \( P_{(t+1)}P_{(t)}^\top S^{(t)} \).\\
        \tikz[baseline]{\node[fill=orange!20, anchor=base, inner sep=0pt, outer sep=0pt] {\textbf{Compute \( G_C \):}};}
        Computes \( G_C \) using dense matmul. \\
        \tikz[baseline]{\node[fill=blue!20, anchor=base, inner sep=0pt, outer sep=0pt] {\textbf{Apply update:}};}
        Updates full $W$ after dense matmul.
        \tcbline
        \textcolor{blue}{\GaLore{}} \\
        \tikz[baseline]{\node[fill=green!20, anchor=base, inner sep=0pt, outer sep=0pt] {\textbf{Compute dense \( P \):}};}
         Top-\( r \) left singular vectors of grad \( G_W \). \\
        \tikz[baseline]{\node[fill=yellow!20, anchor=base, inner sep=0pt, outer sep=0pt] {\textbf{Update\_state:}};}
        Maintains optimizer state. \\
        \tikz[baseline]{\node[fill=orange!20, anchor=base, inner sep=0pt, outer sep=0pt] {\textbf{Compute \( G_C \):}};}
        Computes \( G_C  \) using dense matmul. \\
        \tikz[baseline]{\node[fill=blue!20, anchor=base, inner sep=0pt, outer sep=0pt] {\textbf{Apply update:}};}
         Updates full $W$ after a dense matmul.
         \tcbline
        \textcolor{blue}{\textsc{Grass} (ours)} \\
        \tikz[baseline]{\node[fill=green!20, anchor=base, inner sep=0pt, outer sep=0pt] {\textbf{Compute sparse \( P \):}};}
        Computes the selection matrix $B$ and the diagonal scaling matrix $\rho$ based on row norms of $G_W$. \\
        \tikz[baseline]{\node[fill=yellow!20, anchor=base, inner sep=0pt, outer sep=0pt] {\textbf{Update\_state:}};}
        Resets \( S^{(t)} \) to zero as necessary. \\
        \tikz[baseline]{\node[fill=orange!20, anchor=base, inner sep=0pt, outer sep=0pt] {\textbf{Compute \( G_C \):}};}
         Uses matrix associativity and sparse matmul. \\
        \tikz[baseline]{\node[fill=blue!20, anchor=base, inner sep=0pt, outer sep=0pt] {\textbf{Apply update:}};}
        Sparse update $W$ after sparse matmul.
    \end{tcolorbox}
\end{minipage}
\hfill
\end{table*}

\looseness=-1
To allow for full-rank pretraining and finetuning, another approach for memory-efficient LLM training involves designing adaptive optimizers \citep{shazeer2018adafactor}. One such class, memory-efficient subspace optimizers, 
utilizes projection matrices ($P$) to project high-dimensional gradients into a lower-dimensional space and performs optimization within the subspace. This projection significantly reduces the memory footprint required to store optimizer states. Existing methods such as \GaLore{} \citep{zhao2024galore} and \Flora{} \citep{hao2024flora} employ dense projection matrices, which introduce additional memory and computational overhead. In contrast, we employ structured sparse matrices for $P$, demonstrating their advantages in memory, computation, and communication efficiency across both pretraining and finetuning.
Our main contributions include:
\begin{enumerate}[noitemsep, leftmargin=10pt, topsep=0pt, partopsep=0pt]
    \looseness=-1
    \item  We introduce \textsc{Grass}, a novel method that enables full parameter training of LLMs with structured sparse gradients. By leveraging sparse projection matrices, \textsc{Grass} significantly reduces memory consumption and communication overhead compared to existing projection-based optimization techniques. We theoretically motivate and empirically analyze effective ways to construct the sparse projection matrix for \textsc{Grass}.
    \looseness=-1
    \item We conduct extensive experiments on both pretraining and finetuning tasks, demonstrating that \textsc{Grass} converges faster in wall-clock time than existing projection-based methods due to its additional compute efficiency benefits. \textsc{Grass} exhibits minimal performance degradation (<0.1 perplexity gap) compared to full-rank training on the 1B parameter LLaMA model while achieving a 2.5$\times$ reduction in memory footprint.
    \looseness=-1
    \item We present an optimized PyTorch implementation of \textsc{Grass} for modern hardware, incorporating implementation tricks to enhance training throughput, stability, and scalability. For pretraining a 1B LLaMA model, \textsc{Grass} achieves a 25\% throughput increase on a single GPU and up to a 2$\times$ throughput improvement on 8 GPUs over full-rank training and \GaLore{}. Furthermore, \textsc{Grass}'s low memory footprint enables half-precision training of a 13B LLaMA model on a single 40GB A100 GPU, a feat that existing projection-based optimization methods cannot achieve.
\end{enumerate}

\section{A Unified View of Memory-efficient Subspace Optimizers (MeSO)}
\label{sec:meso}
\vspace{-0.06in}
\looseness=-1
\paragraph{High memory usage of full-rank training.}
Standard full-rank training of the weight matrix $W \in \mathbb{R}^{m \times n}$ in any linear layer of an LLM involves \textbf{1)} computing the full-parameter gradient $G_W \coloneqq \nabla L(W)$ and \textbf{2)} using it to update the model weights and optimizer states:
\vspace{-0.2in}
\begin{spacing}{1}
{\small
\begin{align}
    S^{(t+1)}, \Delta W^{(t)} \leftarrow& \; \texttt{opt.update}(S^{(t)}, \nabla L(W^{(t)})) \nonumber \\
    W^{(t+1)} \leftarrow& \; W^{(t)} + \Delta W^{(t)}
    \label{eqn:full_parameter_update}
\end{align}}
\end{spacing}
\vspace{-0.15in}
Here, \texttt{opt.update} denotes the optimizer's update function, which uses the current optimizer state \(S^{(t)}\) and the gradient to compute the updated state \(S^{(t+1)}\) and a learning-rate-adjusted weight update \(\Delta W^{(t)}\) (see Appendix~\ref{appendix:opt_functions} for the pseudocode for the Adam optimizer). However, storing both the gradient and optimizer state incurs significant memory overhead – for example, an additional \(3mn\) floats for Adam – motivating the need for more memory-efficient optimization techniques. We discuss these techniques in the following sections, while Appendix~\ref{sec:literature_comparison} covers additional related work.

\vspace{-0.1in}
\looseness=-1
\paragraph{Memory-efficient optimization in a subspace.}
To minimize the memory usage of the optimizer state, memory-efficient subspace optimizers (\MeSO) 
restrict the optimization to a subspace defined by a projection matrix $P \in \mathbb{R}^{m \times r}$ ($r \ll m$) through the following objective:
$
    \min_{A \in \mathbb{R}^{r \times n}}  L(W_0 + PA)
$. Applying an off-the-shelf optimizer like Adam to learn the smaller matrix $A$ reduces the optimizer state size to $O(rn)$, which can be much smaller than the $O(mn)$ used in full-rank training. We provide the pseudocode of this optimization procedure in Algorithm~\ref{alg:meso}, which unifies both existing methods and our proposed method\footnote{This algorithm version never materializes the $A$ matrix, but is equivalent as we show in Appendix~\ref{appendix:memeff}.}. We highlight the key parts of this algorithmic framework below.

\vspace{-0.1in}
\looseness=-1
\paragraph{Computing the projection matrix,}\hspace{-.1in} \tikz[baseline]{\node[fill=green!20,anchor=base, inner sep=0pt, outer sep=0pt]{$\texttt{compute}_P$}}.
Employing a fixed $P$ throughout training confines the search to its column space, limiting the learned model's expressiveness. To address this, MeSO methods periodically recompute $P$ every \( K \) iterations with different choices (Algorithm \ref{alg:custommeso}):
\Flora{} \cite{hao2024flora} independently samples each entry of \( P \) from \( \mathcal{N}(0, 1/r) \), whereas 
\textsc{Grass} \cite{zhao2024galore} sets $P$ to be the top-\( r \) left singular vectors of the full-parameter gradient matrix $\nabla L(W)$ obtained through a Singular Vector Decomposition (SVD). Despite these differences, a commonality among prior works is the choice of \textit{dense} matrices for \( P \). In our work, we explore the use of \textit{sparse} matrices as an alternative and propose several principled choices for such matrices in Section \ref{subsec:compute_p}.

\vspace{-0.05in}
\looseness=-1
\paragraph{Optimizer state update,} \tikz[baseline]{\node[fill=yellow!20,anchor=base, inner sep=0pt, outer sep=0pt] {$\texttt{update\_state}$}}. Updating $P$ can modify the subspace optimization landscape. Different methods have proposed distinct strategies for updating the existing optimizer state \( S^{(t)} \). We describe our strategy in Section~\ref{subsec:implementation}.

\vspace{-0.05in}
\looseness=-1
\paragraph{Projection of the full gradient,}\hspace{-.1in} \tikz[baseline]{\node[fill=orange!20,anchor=base, inner sep=0pt, outer sep=0pt]{$P^\top \nabla L(W^{(t)})$}}.
MeSO methods require projecting the $m \times n$ full parameter gradient matrix $\nabla L(W^{(t)})$ into a lower-dimensional subspace $r \times n$ via left multiplication with $P^\top$. Existing methods compute this projection by first materializing the full gradient matrix $\nabla L(W^{(t)})$ in memory before performing the left projection multiplication. 
In contrast, \GRASS{} leverages the associative property of matrix multiplication and the sparse structure of $P$ to compute this projection without materializing the full gradient. 
This yields considerable computational and memory savings, detailed in Section~\ref{subsec:efficiency}. These efficiencies also extend to the weight update step, \tikz[baseline] {\node[fill=blue!20,anchor=base, inner sep=0pt, outer sep=0pt]{$W^{(t)} + \alpha P \Delta^{(t+1)}$};}, due to the sparsity of $P$. 
Here, the scale factor $\alpha$ (also used in \GaLore{}) adjusts the effective learning rate of these linear layer weight matrices relative to other trainable model parameters.

\begin{table*}[htb]
\centering
\small %
\setlength{\tabcolsep}{4pt} %
\resizebox{\textwidth}{!}{
\begin{tabular}{>{\bfseries}l|ccc|cc|c}
\toprule
\textbf{Method} & \multicolumn{3}{c|}{\textbf{Memory}} & \multicolumn{2}{c|}{\textbf{FLOPs}} & \textbf{Comm} \\
\cmidrule{2-6}
                & \textbf{Weights} & \textbf{Optimizer} & \textbf{Grad} & \textbf{Regular step (Lines \ref{alg:line:nonprojstart}-\ref{alg:line:nonprojend}) } & \textbf{\texttt{compute}$_P$ step (Line \ref{alg:line:computep}) } &  \\
\midrule
Full         & $mn$   & $2mn$   & $mn$    & $mbn  + mn + Cmn$ & $0$ & $mn$ \\
LoRA         & $mn + mr + nr$ & $2mr + 2nr$ & $mr + nr$ & $mbn + 2rmn + C(rm+rn) + rn + rm$ & $0$ & $mr + nr$ \\
ReLoRA       & $mn + mr + nr$ & $2mr + 2nr$ & $mr + nr$ & $mbn + 2rmn + C(rm+rn) + rn + rm$ & $mnr + mn$ & $mr + nr$ \\
\Flora{}        & $mn$   & $mr + 2nr$ & $mn$    & $mbn + 2rmn + mn + C rn$ & $mr$ & $mn$ \\
GaLore       & $mn$   & $mr + 2nr$ & $mn$  & $mbn + 2rmn + mn + C rn$ & $mn \min(n, m)$ & $mn$ \\
\textsc{Grass} (ours) & $mn$   & $2r + 2nr$ & $nr$    & $rbn + 3rn + Crn$ & $mn + m+r$ & $nr$ \\
\bottomrule
\end{tabular}
}
\caption{Summary of Memory, FLOPs, and Distributed Communication Volume for the different methods.
\GRASS{} improves over existing methods in Memory, FLOPs, and Communication. Weight $W \in \mathbb{R}^{m \times n}$. $b$ is token batch size, $r$ is subspace rank, $C$ cost of optimzer update operations per parameter, $G \in \mathbb{R}^{m \times n}, P \in \mathbb{R}^{m \times r}$.
Detailed breakdown in Appendix \ref{sec:comparisons}.}
\vspace{-0.15in}
\label{tab:summary}
\end{table*}

\section{{\fontsize{10}{11.5}\selectfont \GRASS{}}: a more-efficient {\fontsize{10}{11.5}\selectfont MeSO} optimizer}

\looseness=-1
Unlike prior MeSO methods that employ dense projection matrices, \GRASS{} (GRAdient Structured Sparsification) utilizes a sparse projection matrix \(P \in \mathbb{R}^{m \times r}\), where each column \(p_j \in \mathbb{R}^m\) has at most one non-zero entry \((\|p_j\|_0 \le 1, \forall j \in [r])\). This structure effectively constrains the subspace optimization to update only \(r\) rows of the full weight matrix \(W\), inducing structured row-sparsity in the gradients – hence the name \GRASS{}. By periodically updating \(P\), \GRASS{} learns different rows of \(W\) in different iterations, resembling a generalized form of coordinate gradient descent. We dive into the efficiency benefits of this sparse projection and various methods for constructing \(P\) in the following subsections.

\subsection{Efficiency gains of \GRASS}
\label{subsec:efficiency}
 \looseness=-1
\paragraph{Efficient Storage of $P$.} In \textsc{Grass}, the sparse projection operator $P^\top \in \mathbb{R}^{r \times m}$ can be expressed as the product of a diagonal scaling matrix $\rho \in \mathbb{R}^{r \times r}$ and a binary selection matrix $B \in \{0, 1\}^{r \times m}$ which selects a single $j$-th row in $G_W$ for its $i$-th row $B_{ij} = 1$. Both $\rho$ and $B$ can be efficiently stored using $r$ instead of $mr$ floats, making \GRASS{} more memory-efficient in optimizer-related storage (\textbf{Optimizer} column in  \autoref{tab:summary}).

\looseness=-1
\paragraph{Efficient Gradient Projection.} \GRASS{} avoids computing and storing the full gradient matrix \(G_W \in \mathbb{R}^{m \times n}\) for projection (\tikz[baseline]{\node[fill=orange!20,anchor=base, inner sep=0pt, outer sep=0pt]{$P^\top G_W$}}) , unlike existing MeSO methods \citep{zhao2024galore, hao2024flora}. Leveraging the chain rule, we express \(G_W = (\nabla_y L)^\top X\), where \(\nabla_y L \in \mathbb{R}^{b \times m}\) is the gradient of the loss with respect to the layer outputs and \(X \in \mathbb{R}^{b \times n}\) represents the input activations, with \( b \) being the token batch size. This allows us to apply the associative rule and compute\footnote{Implementation-wise, we only need to define a custom backward pass for the PyTorch linear layer.} the sparse gradient projection efficiently as \(\rho ((B\nabla_y L^\top)X)\). This insight yields significant advantages in compute, memory, and communication: \\
\looseness=-1
$\bullet$ \textit{Compute savings}: By exploiting this regrouped multiplication, \GRASS{} computes the projection in just \(rbn + rn\) FLOPs. In contrast, dense projection methods like \GaLore{} and \Flora{} require \(mbn + rmn\) FLOPs, making \GRASS{} over \(m/r\) times more computationally efficient. This significant advantage arises from 1) leveraging the associative rule, 2) the equivalence of left multiplication by \(\rho\) to a simple row-wise scaling (costing only \(nr\) FLOPs), and 3) the cost-free row selection performed by left multiplication with \(B\). \\
\looseness=-1
$\bullet$ \textit{Memory savings}: \GRASS{}'s multiplication order eliminates the need to ever materialize the full gradient matrix, directly yielding the projected result. This saves memory by avoiding the storage of \(mn\) floats required by other methods (see the \textbf{Grad} column in \autoref{tab:summary}). Importantly, this memory advantage is independent of and can be combined with layerwise weight update techniques \citep{lv2023full, zhao2024galore}, which reduce memory by processing gradients one layer at a time. \\
\looseness=-1
$\bullet$ \textit{Communication savings:} During distributed training, existing MeSO methods like \GaLore{} and \Flora{} communicate the full \(m \times n\) gradient matrix across workers, leading to a communication cost of \(O(mn)\). Since \GRASS{} is implemented in the backward pass, it can directly compute and communicate the \(r \times n\) projected gradient without materializing the full gradient, reducing communication volume to \(O(rn)\) (\textbf{Comm} column in \autoref{tab:summary}).
\looseness=-1
\paragraph{Efficient Weight Update.} The weight update step, \tikz[baseline] {\node[fill=blue!20,anchor=base, inner sep=0pt, outer sep=0pt]{$W^{(t)} + P \Delta^{(t+1)}$};}, also benefits from the sparsity of \(P\) in \GRASS{}. Instead of constructing the full \(m \times n\) update matrix \(P\Delta^{(t+1)}\), which is row-sparse, \GRASS{} directly computes and applies the updates to the \(r\) nonzero rows. This reduces the computational cost to just \(2rn\) FLOPs, compared to the \(rmn + mn\) FLOPs required by dense update methods like \GaLore{} and \Flora{}.

\subsection{Choices of sparse $P$}
\label{subsec:compute_p}
We now discuss concrete choices for \tikz[baseline]{\node[fill=green!20,anchor=base, inner sep=0pt, outer sep=0pt]{$\texttt{compute}_P$}} by specifying how to construct $\rho$ and $B$ for $P^\top = \rho S$. To simplify the notation, we denote the index of the only non-zero entry in the $j$-th row of $B$ by $\sigma_j \in [m]$.
We consider both stochastic and deterministic approaches to construct $\{\sigma_j\}_{j=1}^r$ and $\{\rho_{jj}\}_{j=1}^r$.

\vspace{-0.05in}
\looseness=-1
\paragraph{A. Stochastic construction of $P$.} Since $\sigma_j \in [m]$ is a categorial variable, a natural approach is the with-replacement sampling of $\sigma_j \stackrel{\text{i.i.d.}}{\sim} \textrm{Multinomial}(1, q)$, with the probability of sampling any integer $k\in[m]$ given by $q_k$. To ensure the unbiasedness\footnote{See the proof of this statement in Appendix~\ref{sec:unbiasedness}.} of the reconstructed gradient $\mathbb{E}[PP^\top G_W] = G_W$ for its optimization convergence benefits, we set $\rho_{jj} = \frac{1}{\sqrt{r \cdot q_{\sigma_j}}}$ after sampling $\sigma_j$. To set the multinomial distribution parameter $q$, we consider two different principles:
\begin{itemize}[noitemsep, leftmargin=10pt, topsep=0pt, partopsep=0pt]
\looseness=-1
\item \textit{The Variance-reduction principle}: Here we want to minimize the total variance of the gradient estimate $PP^\top G_W$. The optimal $q$ is given by the following theorem (proof in Appendix \ref{sec:proof-one}):
\vspace{-0.05in}
\begin{theorem}
\label{theorem:one}
Among all the Multinomial($1, q$) distributions, the one that is proportional to the row norms of $G$ with $q_k = \frac{\|G_k\|_2}{\sum_{i=1}^m \|G_i\|_2}$ minimizes the total variance of the gradient estimate $PP^\top G$.
\end{theorem}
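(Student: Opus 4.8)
The plan is to first write the estimator $PP^\top G$ in a fully explicit form, then reduce the variance computation to a one-dimensional optimization over the probability simplex. Write $P^\top = \rho B$ with $B\in\{0,1\}^{r\times m}$ the selection matrix whose $j$-th row has its single $1$ in column $\sigma_j$, and $\rho$ diagonal with $\rho_{jj} = (r\,q_{\sigma_j})^{-1/2}$. Then $P = B^\top\rho$ and $PP^\top G = B^\top\rho^2 B G$. Since $\rho^2$ has diagonal entries $(r q_{\sigma_j})^{-1}$ and $B^\top$ sums over all $j$ with $\sigma_j = k$, a short computation gives that the $k$-th row of $PP^\top G$ equals $\tfrac{N_k}{r q_k}\,G_k$, where $G_k$ is the $k$-th row of $G$ and $N_k \coloneqq |\{j\in[r] : \sigma_j = k\}|$ is the number of times index $k$ is drawn. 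The key observation is that with-replacement i.i.d. sampling of the $\sigma_j$ makes $(N_1,\dots,N_m)\sim\mathrm{Multinomial}(r,q)$, so $\mathbb{E}[N_k] = r q_k$ (which re-derives unbiasedness) and $\mathrm{Var}(N_k) = r q_k(1-q_k)$.

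Next I would compute the total variance, interpreting it as $\mathbb{E}\big[\|PP^\top G - G\|_F^2\big] = \sum_{k,l}\mathrm{Var}\big((PP^\top G)_{kl}\big)$. Because distinct rows of the estimator share only the randomness already captured in the $N_k$, we get
\[
\mathbb{E}\big[\|PP^\top G - G\|_F^2\big]
= \sum_{k=1}^m \Big(\tfrac{1}{r q_k}\Big)^2 \mathrm{Var}(N_k)\,\|G_k\|_2^2
= \frac{1}{r}\Big(\sum_{k=1}^m \frac{\|G_k\|_2^2}{q_k} - \|G\|_F^2\Big).
\]
Since $\|G\|_F^2$ is independent of $q$, minimizing the total variance over the simplex $\{q : q_k\ge 0,\ \sum_k q_k = 1\}$ is equivalent to minimizing $\sum_k \|G_k\|_2^2/q_k$.

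Finally I would solve this constrained minimization. The cleanest route is Cauchy--Schwarz: $\big(\sum_k \|G_k\|_2\big)^2 = \big(\sum_k \tfrac{\|G_k\|_2}{\sqrt{q_k}}\cdot\sqrt{q_k}\big)^2 \le \big(\sum_k \tfrac{\|G_k\|_2^2}{q_k}\big)\big(\sum_k q_k\big) = \sum_k \tfrac{\|G_k\|_2^2}{q_k}$, with equality iff $\tfrac{\|G_k\|_2}{\sqrt{q_k}}\propto\sqrt{q_k}$, i.e. $q_k\propto\|G_k\|_2$; normalizing gives $q_k = \|G_k\|_2/\sum_{i=1}^m\|G_i\|_2$. (Equivalently, a Lagrange-multiplier argument on the strictly convex objective yields the same stationary point, which is therefore the global minimizer.) Rows with $\|G_k\|_2 = 0$ are handled by the convention $0/0 = 0$, consistent with setting $q_k = 0$ there.

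I expect the only real care-point — there is no deep obstacle — to be the bookkeeping in the first step: because sampling is with replacement, $PP^\top G$ is \emph{not} simply $G$ restricted to the selected rows but carries the multiplicity factors $N_k/(r q_k)$, and getting these factors right (including the $1/r$ in $\rho$) is what makes both the unbiasedness and the variance formula come out correctly. Everything after that is a standard simplex optimization.
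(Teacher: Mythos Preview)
Your argument is correct and reaches the same reduced objective $\sum_k \|G_k\|_2^2/q_k$ as the paper, but the route is genuinely different. The paper expands $PP^\top$ as $\tfrac{1}{r}\sum_{i} q_{\sigma_i}^{-1} e_{\sigma_i}e_{\sigma_i}^\top$, then computes $\mathbb{E}[PP^\top PP^\top]$ by squaring this sum and separating diagonal from off-diagonal cross terms, arriving at $\tfrac{1}{r}\mathrm{diag}(q_1^{-1},\dots,q_m^{-1}) + \tfrac{r-1}{r}I$; the trace against $G^\top(\cdot)G$ then yields the objective. You instead observe directly that $PP^\top$ is a \emph{diagonal} matrix with entries $N_k/(rq_k)$, which immediately expresses each row of the estimator as a scalar multiple of the corresponding row of $G$ and reduces the variance to $\sum_k (rq_k)^{-2}\mathrm{Var}(N_k)\|G_k\|_2^2$ via the multinomial marginal variance. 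This is cleaner bookkeeping: it avoids the cross-term expansion and makes the row-wise structure transparent. For the final minimization, the paper uses Lagrange multipliers on the relaxed problem (dropping $q_k\ge 0$) and then checks feasibility; your Cauchy--Schwarz argument is more self-contained, giving the lower bound and the equality case in one stroke without needing to argue separately that the stationary point is a global minimizer. Either approach is fine here, but yours is the more elementary one.
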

\vspace{-0.05in}
\looseness=-1
\noindent We call this method \textbf{Multinomial-Norm}. 
\item \textit{The Subspace-preservation principle}:
\looseness=-1
When $P$ is fixed for a large $K$ number of iterations and the gradient is low-rank \citep{zhao2024galore}, reducing the variance of the gradient estimate could be less important than preserving the low-rank subspace of $G_W$ upon projection. 
To achieve this, we set $q_k$ proportional to the squared row norms of $G_W$ ($q_k \propto \|G_k\|^2$) and call this method \textbf{Multinomial-Norm$^2$}. This $q$ distribution gives us approximate leverage score sampling \citep{magdon-ismail2010row}, which ensures high probability preservation of the low-rank subspace with little additive error (see Appendix \ref{sec:proof-two}).
\end{itemize}

\looseness=-1
\noindent In addition to these two principled unbiased sampling with replacement methods, we also experiment with the \textbf{Uniform Distribution} with $q_k = 1/m$ as a baseline. Furthermore, we explore the non-replacement sampling counterparts (\textbf{-NR}) for each of the three distributions. Since it is analytically intractable to guarantee unbiasedness in this case, we set $\rho_{jj} = 1$ for the \textbf{NR} methods.
\looseness=-1
\paragraph{B. Deterministic construction of $P$.} We consider minimizing the gradient reconstruction error in Frobenius norm $\|PP^\top G_W - G_W\|_F^2$ as the principle to choose $P$. One minimizing solution sets all $\rho_{jj} = 1$ and $\{\sigma_j\}_{j=1}^r$ to be the indices of rows of $G_W$ with largest row-norms. We call this $\texttt{compute}_P$ method \textbf{Top}-$r$.

\paragraph{Compute cost.} Unlike \GaLore{}, \GRASS{} only requires computing row norms of $G_W$ but not an SVD in the update step. (\texttt{compute}$_P$ column in \autoref{tab:summary}). Furthermore, no additional memory is consumed for SVD as in \GaLore{}.

\subsection{Implementation Details}
\label{subsec:implementation}
\vspace{-0.05in}
\looseness-1
\paragraph{Updating the Optimizer State.} Updating the projection matrix $P$ in \GRASS{} can lead to significant shifts in the selected rows of the parameter matrix $W$ between iterations. Since different rows of $W$ may have distinct gradient moment statistics, we reset the optimizer states to zero during the \tikz[baseline]{\node[fill=yellow!20,anchor=base, inner sep=0pt, outer sep=0pt]{$ \texttt{update\_state} $};} step. To further stabilize training after such updates, we implement a learning rate warmup phase. This combined approach effectively mitigates training instabilities, particularly those observed in smaller models during pretraining.

\vspace{-0.1in}
\looseness=-1 
\paragraph{Distributed Training.} Since \GRASS{} updates the projection matrix during each worker's backward pass in distributed training, synchronizing the selected indices across workers is necessary. To minimize communication overhead, we first compute the gradient \(G_W\) and then sketch it by sampling \(r\) columns based on their norms, resulting in a sketched matrix \(G_{comm} \in \mathbb{R}^{m \times r}\). An all-reduce operation is performed on \(G_{comm}\), ensuring all workers access a consistent version of the sketch before sampling indices. Furthermore, we implement custom modifications to prevent PyTorch DDP \citep{paszke2019pytorch} from allocating memory for full gradients in our \GRASS{} implementation (see \autoref{sec:DDP_hack} for details).

\vspace{-0.05in}
\section{Experiments}

\vspace{-0.02in}
\subsection{Pretraining Performance}
\vspace{-0.02in}

\paragraph{Experimental setup.}
\looseness=-1
We compare\footnote{We compare against \Flora{} in Section~\ref{subsec:finetuning} and \ref{subsec:instruction_finetuning} as it was primarily intended for finetuning in the original work.} \textsc{Grass} against Full-rank (without gradient projection) and \GaLore{} by pretraining LLaMA-based models \citep{touvron2023LLaMA} in BF16 on the cleaned C4 subset of Dolma \citep{soldaini2024dolma}. We train without data repetition over a sufficiently large amount of data, across a diverse range of model sizes (60M, 350M, 1B).
We adopt a LLaMA-based architecture with RMSNorm and SwiGLU activations \citep{touvron2023LLaMA, shazeer2020glu, NEURIPS2019_1e8a1942}. 
For both \GRASS{} and \GaLore{}, we fix the frequency $K$ at 200 iterations, $\alpha$ at 0.25, use a consistent rank $r$, and project the linear layers within the attention and feed-forward layers. $P$ is applied to project the smaller dimension of $G_W$ to achieve the best memory-performance tradeoff \citep{zhao2024galore}. We use the same batch size and tune the learning rate individually for each method (see Appendix \ref{sec:hyperparameters}).

 \begin{table}[htbp]
\centering
\small
\begin{tabular}{@{}lccc@{}} %
\toprule
Model size & \textbf{60M}  & \textbf{350M} & \textbf{1B} \\
\midrule
\textbf{Full-Rank} & 36.97 & 18.71  & 18.12  \\
\textbf{\GaLore{}} & 37.09  & 19.38  & 19.23  \\
\textbf{\textsc{Grass}} & 37.24  & 19.49  & 19.04  \\
\midrule 
\textbf{{$\mathbf{r/d_{model}}$}} & 128 / 512  & 128 / 1024 & 256 / 2048 \\
\textbf{Tokens} & 1.0B & 5.4B & 8.8B \\
\bottomrule
\end{tabular}
\caption{Train perplexity of LLaMA models on the C4 subset of Dolma. \textsc{Grass} is competitive with \GaLore{}, but with lower memory footprint and higher training throughput.}
 \vspace{-0.1in}
\label{tab:perplexity}
\end{table}

\begin{figure}[htbp]
    \centering
    \includegraphics[width=\linewidth]{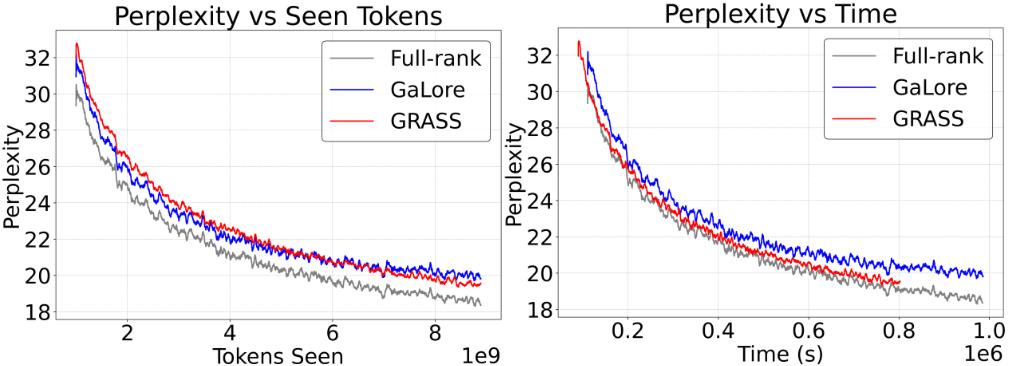}
    \caption{Pretraining 1B LLaMA on 8.8B tokens of C4 with \textsc{Grass}, Full-rank and \GaLore{}. (Left) Train perplexity vs seen tokens. (Right) Train perplexity vs wall-clock time. \textsc{Grass} outperforms \GaLore{} and shows $<0.01$ perplexity gap with Full-rank loss curve in wall-clock time.}
     \vspace{-0.15in}
    \label{fig:perplexity}
\end{figure}

\paragraph{Results.}
 As shown in \autoref{tab:perplexity}, \textsc{Grass} matches \GaLore{} and approaches Full-rank's performance within a perplexity gap of less than $1$ even when $r/d_{model}=8$.
In \autoref{fig:perplexity}, for the 1B model we see that this gap disappears when we look at perplexity vs. training time (as opposed to tokens seen) on a single A100 GPU, where due to increased pretraining throughput \textsc{Grass} closely follows the Full-rank loss curve with $<0.1$ perplexity gap.

\vspace{-0.05in}
\subsection{Finetuning Performance}
\label{subsec:finetuning}
\vspace{-0.02in}

\begin{table*}[tb]
\centering
\resizebox{\textwidth}{!}{
\small
\begin{tabular}{@{}lrrrrrrrrrr@{}}
\toprule
Model & COLA & MNLI & MRPC & QNLI & QQP & RTE & SST2 & STSB & WNLI & Average \\
\midrule
Full-rank          & 59.62 & 87.36 & 91.51 & 92.60 & 90.43 & 79.03 & 94.49 & 90.38 & 56.34 & 82.42 \\
\midrule
LoRA   & 58.36 & \textcolor{blue}{86.80} & \textcolor{blue}{90.09} & \textcolor{blue}{92.49} & \textcolor{blue}{89.43} & 75.09 & \textcolor{blue}{94.49} & \textcolor{blue}{90.22} & 56.34 & \textcolor{blue}{81.48} \\
\GaLore{}      & 57.64 & \textcolor{blue}{87.40} & 88.97 & \textcolor{blue}{92.86} & \textcolor{blue}{88.94} & \textcolor{blue}{76.17} & \textcolor{blue}{94.49} & 89.76 & 56.34 & 81.40 \\
\Flora{}        & \textcolor{blue}{59.65} & 86.65 & 89.82 & 92.09 & 88.61 & \textcolor{blue}{76.34} & 94.27 & \textcolor{blue}{90.06} & 56.34 & \textcolor{blue}{81.53} \\

\textsc{Grass} (Top-$r$)   & \textcolor{blue}{59.16} & \textcolor{blue}{86.92} & \textcolor{blue}{89.60} & \textcolor{blue}{92.42} & \textcolor{blue}{88.65} & \textcolor{blue}{76.37} & 94.15 & \textcolor{blue}{90.13} & 56.34 & \textcolor{blue}{81.53} \\
\textsc{Grass} (Multi-Norm$^2$-NR) & \textcolor{blue}{58.87} & 86.08 & \textcolor{blue}{89.94} & 91.69 & 83.36 & \textcolor{blue}{76.17} & \textcolor{blue}{94.73} & 90.00 & 56.34 & 81.35 \\
\textsc{Grass} (Multi-Norm-R) & 57.81 & 86.25 & 87.58 & 91.80 & 88.06 & 68.59 & 94.27 & 89.73 & 56.34 & 80.05 \\
\textsc{Grass} (Uni-NR) & 49.66 & 85.70 & 78.01 & 90.94 & 87.56 & 57.76 & 93.35 & 84.86 & 56.34 & 76.02 \\
\bottomrule
\end{tabular}
}
\caption{Evaluating Full-rank and different memory-efficient optimization methods on the GLUE benchmark using RoBERTa-Base.
\textsc{Grass} is competitive with LoRA and \Flora{} but with a lower memory footprint. Values in \textcolor{blue}{blue} represent the top three results in each column.}
 \vspace{-0.1in}
\label{tab:glue}
\end{table*}

\paragraph{Experimental setup.}
\looseness=-1
We evaluate \textsc{Grass}, LoRA, Full-rank, \GaLore{}, and \Flora{} on the GLUE NLU benchmark \citep{wang2018glue} by finetuning a pretrained RoBERTa-Base model \citep{liu2019roberta} with a sequence length of 128 in float32 (results on the dev set). For all the optimization methods, we restrict them to only optimize the linear layers in the attention and MLP layers for three epochs with individually tuned learning rates. We set rank $r=8$ for all the low-rank methods. For the MeSO methods, we set the update frequency $K=100$ and tune the scale factor $\alpha$ for each method. (See more details in Appendix~\ref{sec:hyperparameters}.)

\looseness=-1
\vspace{-0.1in}
\paragraph{Results.}
In \autoref{tab:glue}, \textsc{Grass} Top-$r$ performs competitively with LoRA, \Flora{}, and \GaLore{} even though \textsc{Grass} exhibits a reduced memory footprint and improved training throughput compared to these methods as we show in Section~\ref{sec:efficiency}.

\begin{table}[tb]
\centering
\small
\begin{tabular}{lcccc|c}
\toprule
Model &  & \multicolumn{2}{c}{MMLU Acc (\%)} \\ \midrule
LLaMA-7b &  Trainable Params& Alpaca & FLAN v2\\ \midrule \midrule
Full & 6898.3M & 38.12 &  35.85 \\
LoRA      & 159.90M & 38.21 & 34.98\\
\GaLore{}    & 6476.0M & 37.93 & 34.72\\
\Flora{}     & 6476.0M & 37.86 & 35.16 \\
\textsc{Grass} (Top-$r$)    & 6476.0M & \textbf{38.37} & \textbf{36.88} \\ 
\bottomrule
\end{tabular}
\caption{Average 5-shot MMLU accuracy for LLaMA-7B models finetuned with various methods across Alpaca and FLAN v2. \textsc{Grass}, \Flora{}, \GaLore{}, and LoRA were applied to attention and MLP layers using rank 64. \textsc{Grass} not only competes effectively with full training but also offers advantages in terms of lower memory usage and higher throughput compared to all baseline methods.}
 \vspace{-0.1in}
\label{tab:ift-results}
\end{table}

\subsection{Instruction-finetuning Performance}
\label{subsec:instruction_finetuning}
\vspace{-0.02in}

\paragraph{Experimental setup.}

\looseness=-1
We compare \textsc{Grass} against Full finetuning, \GaLore{}, \Flora{}, and LoRA on instruction finetuning using a LLaMA-7B model \citep{touvron2023LLaMA} pretrained on 1T tokens. We finetune on Alpaca \citep{alpaca} (52k samples) and a 100k sample subset of FLAN v2 \citep{wei2021finetuned} from Tulu \citep{wang2023far} (due to FLAN v2’s scale), using BF16 precision, batch size 64, and a source and target sequence length of 512. All methods, except for Full finetuning which updates all parameters, are restricted to only update the linear layers in the attention and MLP layers with rank $r=64$ . We finetune for 1000 steps on Alpaca (1.26 epochs) and 1500 steps on Flan v2 (1.08 epochs). Additional hyperparameters are in \autoref{sec:hyperparameters}. Following prior work \citep{touvron2023LLaMA, dettmers2023qlora}, we assess the instruction-tuned models' average 5-shot test performance on the MMLU benchmark \citep{hendrycks2020measuring} (57 tasks).

\vspace{-0.1in}
\looseness=-1
\paragraph{Results.} As shown in Table~\ref{tab:ift-results}, \textsc{Grass} performs competitively with full-parameter finetuning, \Flora{}, \GaLore{}, and LoRA during instruction finetuning on both Alpaca and Flan v2. Furthermore, Section~\ref{sec:efficiency} demonstrates that, at \(r = 64\), \textsc{Grass} not only matches LoRA's performance but also boasts a lower memory footprint and an 18\% throughput increase. 
Because \textsc{Grass} can perform higher rank training with multiple projection matrix updates, it is expected to further outperform the rank-constrained LoRA in more challenging tasks with larger datasets.

\begin{figure}[tb]
    \centering
    \includegraphics[width=1.01\linewidth]{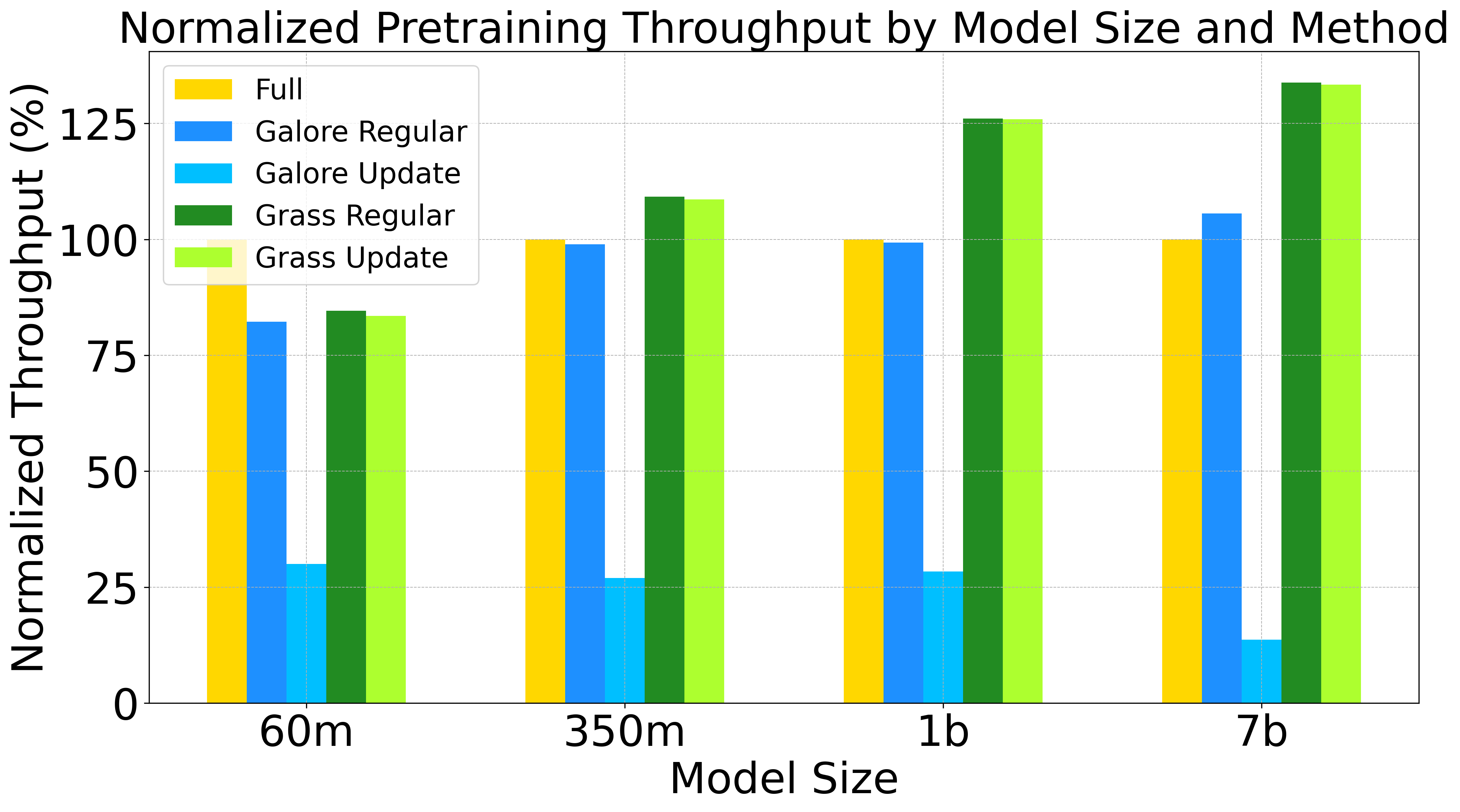}
    \vspace{-0.3in}
    \caption{Normalized pretraining throughput at $r=64$ for \textsc{Grass}, Full-rank, and \GaLore{} relative to Full-rank. \textsc{Grass} throughput exceeds Full and \GaLore{} throughput by $>25\%$.}
     \vspace{-0.1in}
    \label{fig:throughput}
\end{figure}

\vspace{-0.05in}
\subsection{Efficiency analysis}

\begin{figure*}[tb]
    \centering
    \includegraphics[width=\textwidth]{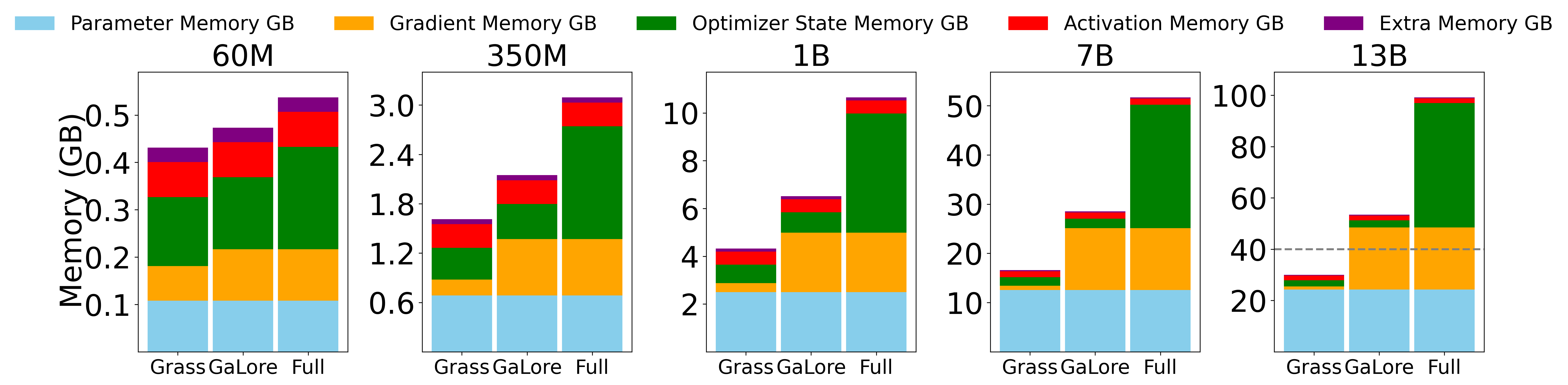}
    \caption{ \looseness=-1 Pretraining memory footprint for \textsc{Grass}, \GaLore{}, and Full across model sizes for a regular (non projection update step) and $r=128$. \textsc{Grass} has a lower memory footprint across all model sizes and the reduction is greater at larger model sizes. }
    \vspace{-0.2in}
    \label{fig:mem1}
\end{figure*}

\begin{figure}[h]
    \centering
        \includegraphics[width=1.01\linewidth]{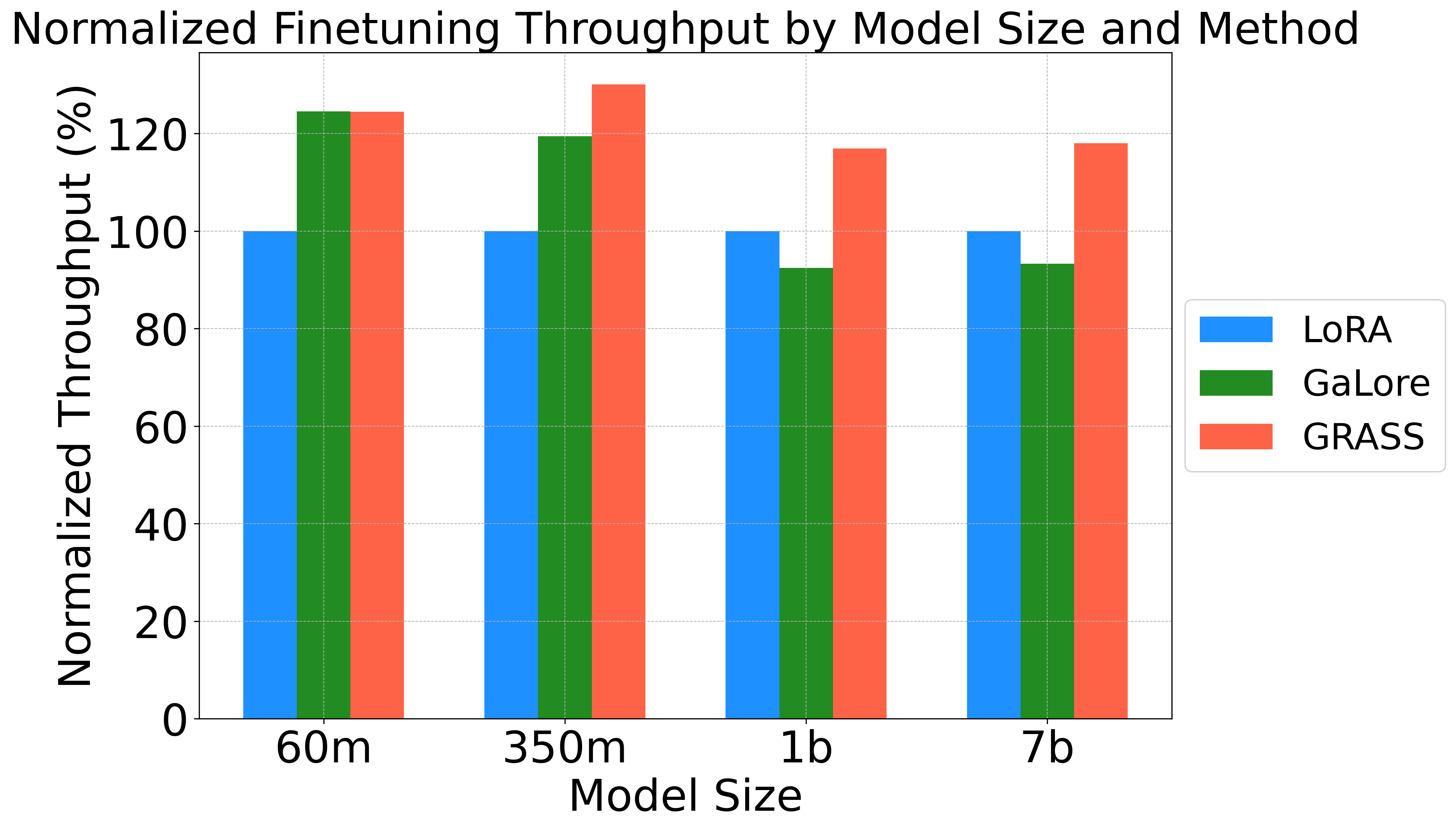}
    \caption{ \looseness=-1 Normalized LLaMA finetuning throughput of \textsc{Grass}, \GaLore{}, and LoRA relative to LoRA. We use rank $r=64$. \textsc{Grass} is $>18\%$ faster than LoRA.}
     \vspace{-0.2in}
    \label{fig:throughput_ft}
\end{figure}
\label{sec:efficiency}

\vspace{-0.05in}
\looseness=-1
\paragraph{Pretraining Throughput.} Figure~\ref{fig:throughput} compares the BF16 pretraining throughput (tokens/s) of \textsc{Grass} and \GaLore{} relative to Full-rank, across model sizes, for both regular and projection update\footnote{The regular update iteration doesn't invoke \texttt{compute}$_P$ but only updates the parameters, while the projection update step performs both.} steps. We use rank \(r=64\) on attention and feedforward layers,
sequence length 256, and total batch size 1024 on a single 80GB A100 GPU. See Appendix~\ref{sec:hyperparameters} for detailed settings.  We did not employ activation checkpointing, memory offloading, or optimizer state partitioning in our experiments.

\looseness=-1
While \textsc{Grass} exhibits lower throughput than Full-rank at 60M parameters (due to customized matrix multiplication overhead), \textsc{Grass} significantly outperforms both at 1B and 7B parameters, achieving 26\% and 33.8\% higher throughput than Full-rank, and 27\% and 26.7\% higher than \GaLore{} (for the regular step). \GRASS{}'s projection update overhead is minimal, unlike \GaLore{}'s costly SVD computations. The throughput advantage for \textsc{Grass} is expected to grow with larger batch sizes, benefiting further from its lower memory footprint compared to other methods.
Appendix \autoref{fig:throughput_rank} provides further throughput comparisons across different ranks, showing that \textsc{Grass} achieves its highest relative throughput gains at rank ($r = 64$), with diminishing returns as rank increases or model size decreases.

\vspace{-0.05in}
\paragraph{Finetuning Throughput.}

\looseness=-1
\autoref{fig:throughput_ft} compares the BF16 finetuning throughput of \textsc{Grass}, \GaLore{}, and LoRA across various LLaMA model sizes, focusing on the regular step. Unlike the pretraining throughput benchmark, we finetune only the attention and MLP layers using $r=64$. We maintain a uniform local batch size, sequence length 256, and total batch size of 1024 across all methods (detailed hyperparameters are provided in Appendix~\ref{sec:hyperparameters}). For the 7B parameter model, \textsc{Grass} achieves throughput improvements of 26\% and 18\% over \GaLore{} and LoRA, respectively. Appendix \autoref{fig:finetuning_throughput_rank} provides further throughput comparisons across ranks 8, 16, 32, and 64, demonstrating that \textsc{Grass} consistently maintains its throughput advantage across these ranks.

\vspace{-0.05in}
\looseness=-1
\paragraph{Pretraining Memory.} Figure~\ref{fig:mem1} benchmarks the BF16 memory footprint of pretraining \GRASS{} against Full-rank and \GaLore{} across various model sizes (token batch size 256, rank (r=128)), focusing on the regular training step. \GRASS{} consistently exhibits a lower memory footprint than both Full-rank and \GaLore{}, with the memory reduction increasing with model size. This advantage stems from \GRASS{}'s reduced gradient and optimizer memory (due to its sparse projection matrices). At 13B parameters, \GRASS{} uses 70\% less memory than Full-rank and 45\% less than \GaLore{}. 

Beyond the memory advantage in the regular update iteration, Grass is also more memory efficient in the projection update iteration compared to its counterpart \GaLore{}: \GaLore{} requires converting the full gradient to float32 for SVD computation when computing the projection matrix, making it unable to pretrain the 13B LlaMA model in BF16 at rank (r = 128) on an 80GB GPU. In contrast, \GRASS{} is capable of pretraining the 13B model on ranks up to $r=768$ on a 40GB GPU and up to $r = 1024$ on a 48GB GPU.

\looseness=-1
\vspace{-0.05in}
\paragraph{Finetuning Memory.} Appendix \autoref{fig:mem2} and \autoref{fig:mem3} compare the memory footprint of \GRASS{} and LoRA during LLaMA finetuning. \GRASS{} demonstrates a memory advantage of roughly 1GB over LoRA when finetuning the 7B parameter model in BF16 at rank (r=64). However, as the batch size increases, activations dominate the memory footprint, and the memory usage of \GRASS{} and LoRA becomes comparable.

\begin{figure}[tb]
    \centering
    \includegraphics[width=0.8\linewidth]{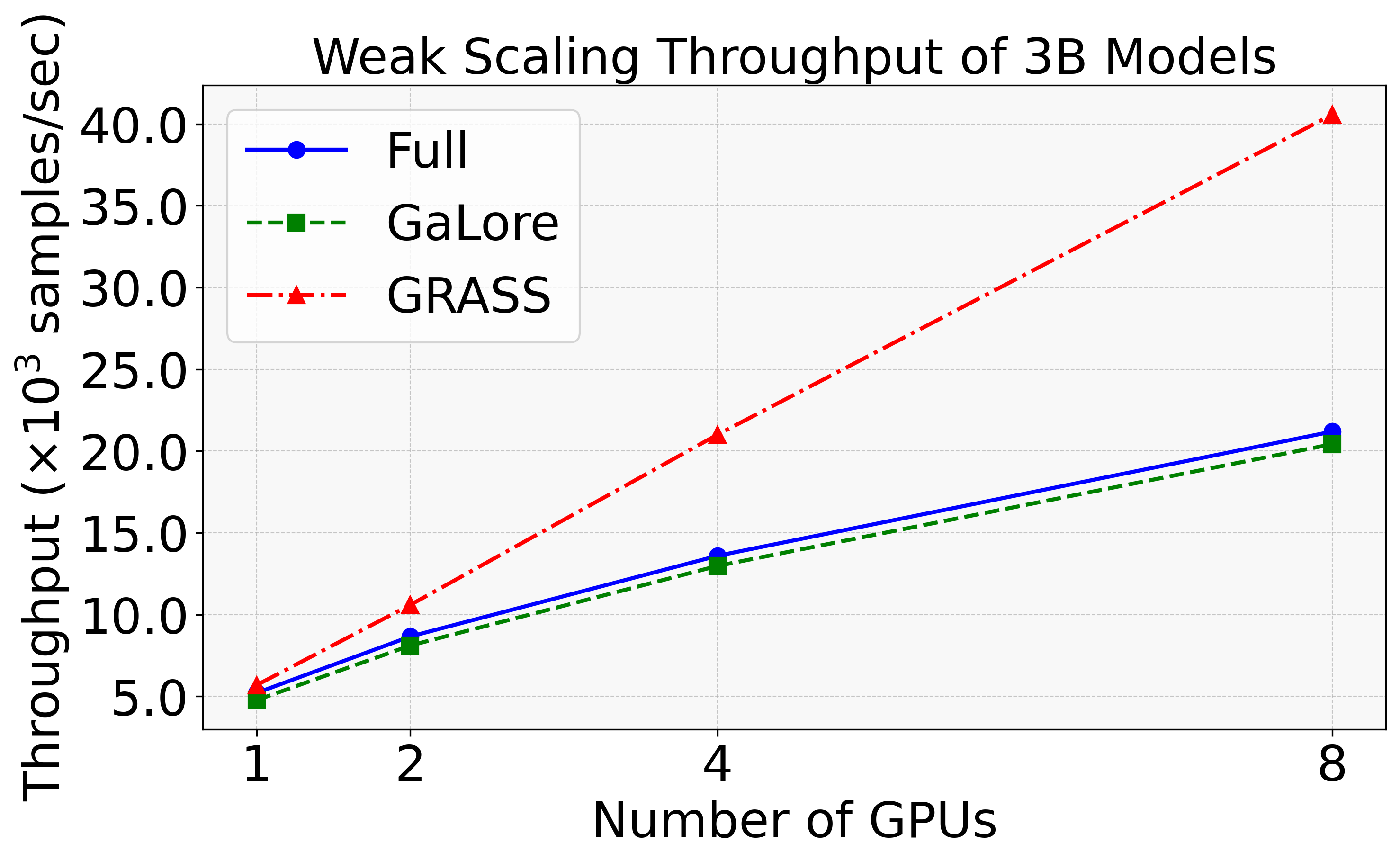}
    \caption{\looseness=-1 Communication Efficiency: Weak Scaling Throughput Comparison for 3B LLaMA pretraining using \textsc{Grass}, Full-rank, and \GaLore{}. \textsc{Grass} shows $2\times$ higher throughput over Full and \GaLore{} at 8 GPUs. }
    \vspace{-0.15in}
    \label{fig:communication}
\end{figure}

\vspace{-0.05in}
\looseness=-1
\vspace{-0.05in}
\paragraph{Communication.} Figure~\ref{fig:communication} benchmarks the (weak scaling \citep{Gustafson}) throughput (tokens/sec) of training a 3B parameter LLaMA model on a multi-GPU L40 compute node with a peak all-reduce bandwidth of 8.64 GB/s as we scale the number of participating GPUs.
We use a token batch size of 4096 per worker (local batch size 16, sequence length 256). \GRASS{}, by communicating only the projected gradients, achieves significantly higher throughput (2$\times$ on 8 GPUs) compared to both Full-rank and \GaLore{}.

\subsection{Ablations}

\looseness=-1
\paragraph{Effect of Rank.} Figure~\ref{fig:rank_sweep} presents ablations on the impact of the subspace rank $r$ for \GRASS{} during pretraining of a 350M parameter LLaMA model on the C4 subset of Dolma. Increasing the rank generally leads to better training losses for the same number of updates, but with diminishing returns. Additionally, since \GRASS{} enables full-parameter training, we observe that training at rank \(r = 128\) for 80k steps is more effective than training at rank \(r = 512\) for 40k steps. \textsc{Grass} can therefore be used to trade-off memory and computational cost where in a memory-constrained setting one could select a lower rank and train longer.

\begin{figure}[h]
    \vspace{-1em}
    \centering
    \includegraphics[width=0.8\linewidth]{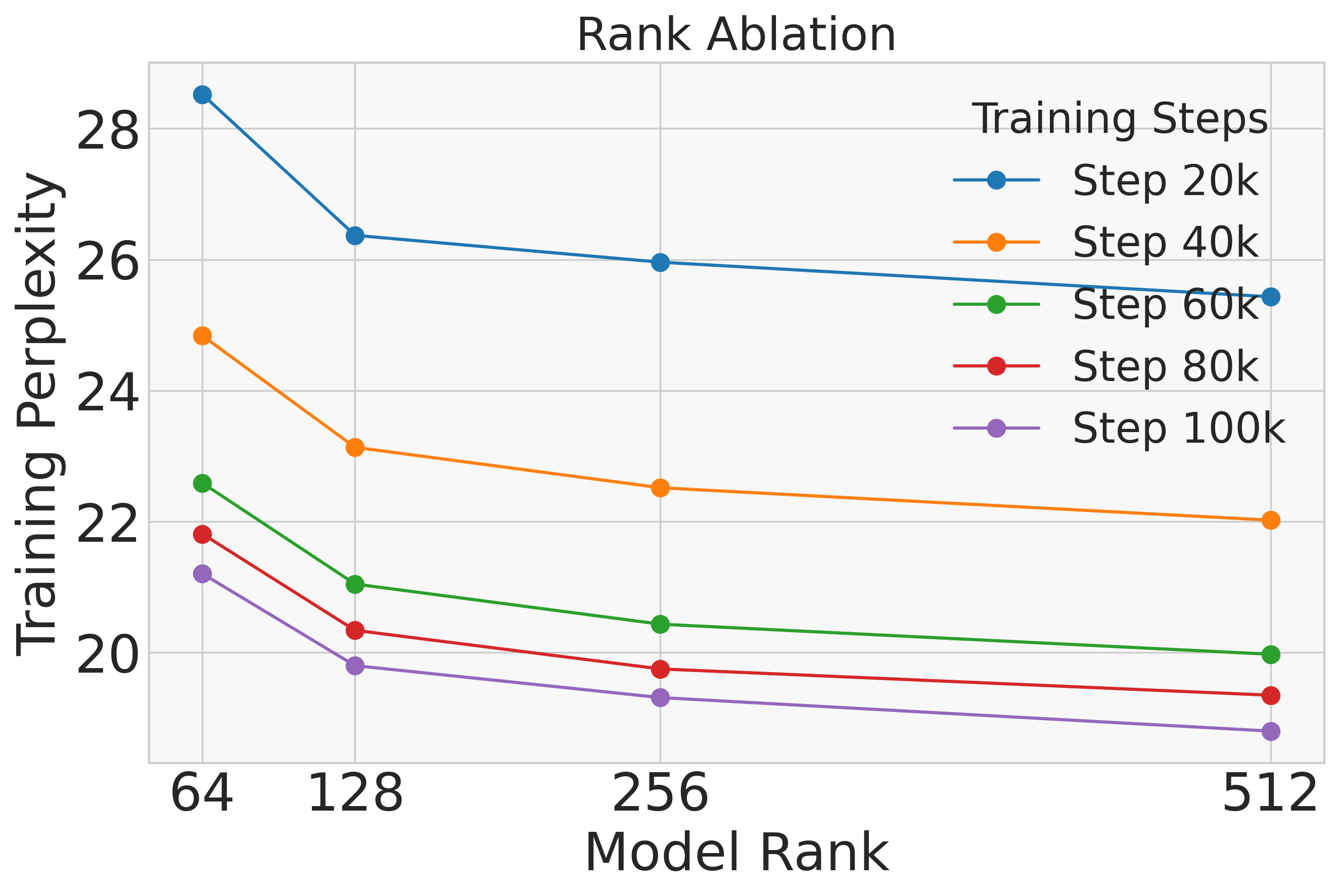}
    \caption{\textsc{Grass} rank ablations for 350M LLaMA training. We report perplexity on Dolma C4 across various ranks and training steps. Loss is averaged over a window of 50 steps. }
    \vspace{-0.1in}
    \label{fig:rank_sweep}
\end{figure}

\looseness=-1
\paragraph{Effect of Update Frequency.} Figure~\ref{fig:freq_sweep} analyzes the impact of update frequency on the convergence of \GRASS{} during pretraining of a 60M-parameter LLaMA model on the Realnews subset of C4 \citep{DBLP:journals/jmlr/RaffelSRLNMZLL20}. Both overly frequent and infrequent updates to the projection matrix hinder convergence. Optimal convergence is achieved within an update frequency range of 200 to 500 iterations.

\begin{figure}[tb]
    \centering
    \includegraphics[width=0.8\linewidth]{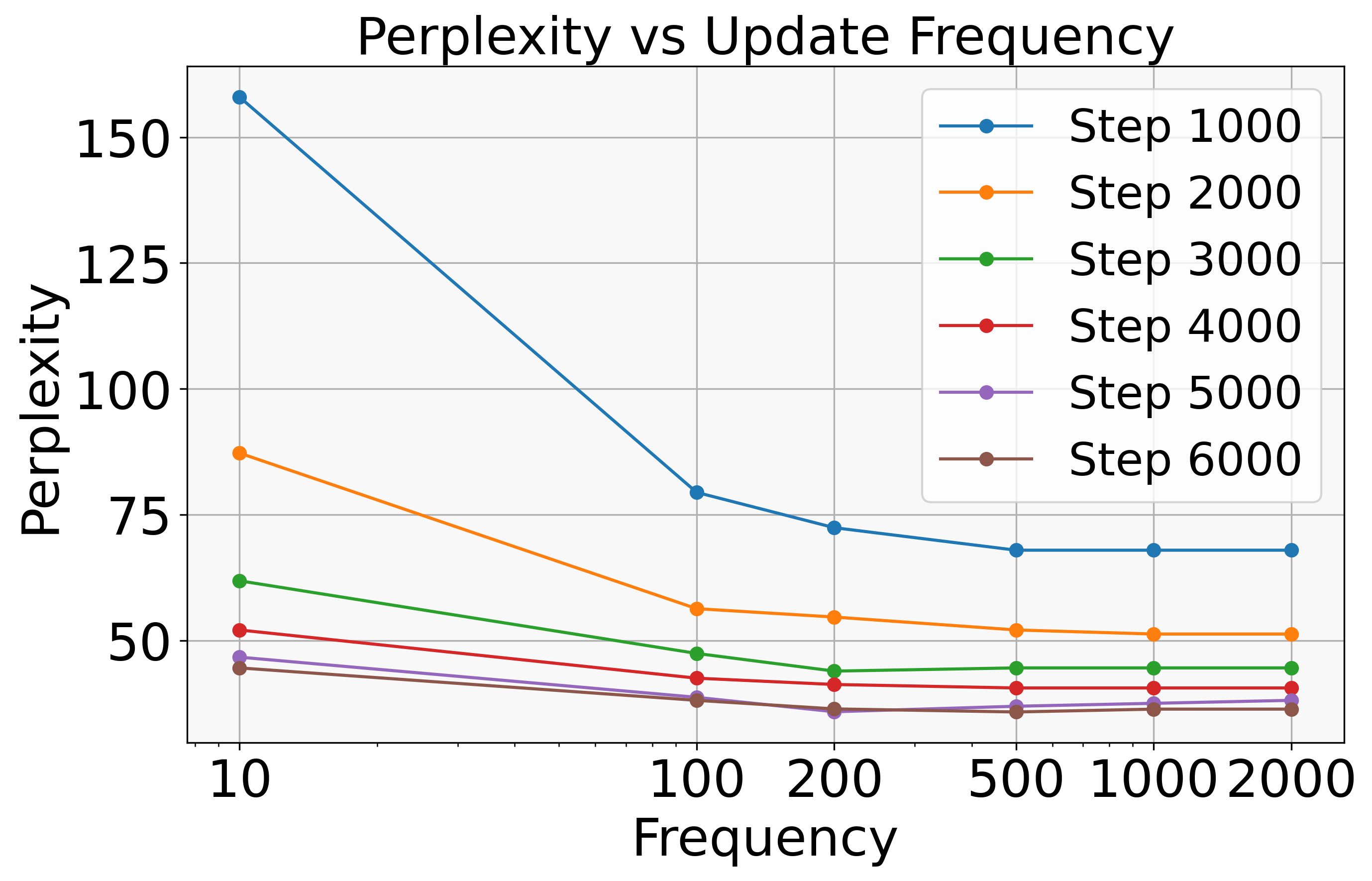}
    \caption{\textsc{Grass} Update Frequency vs. Training Perplexity for  60M LLaMA pretraining on Realnews subset of C4. A frequency of 200 is near optimal.}
    \vspace{-0.1in}
    \label{fig:freq_sweep}
\end{figure}

\looseness=-1
\paragraph{\texttt{compute}$_P$ Methods.} 

\begin{table}[h]
\centering
\small
\begin{tabular}{lcc}
\toprule
Sampling Method                   & Eval perp \\ 
\midrule
Frozen Top-$r$       & 34.78                     \\
Uniform-R               & 32.46                     \\
Uniform-NR          & 31.06                     \\
Multinomial-Norm-R & 31.32 \\
Multinomial-Norm-NR & 30.93 \\
Multinomial-Norm$^2$-R         & 31.85                     \\
Multinomial-Norm$^2$-NR           & 30.91                     \\
Top-$r$                         & \textbf{30.88}            \\
\midrule
\GaLore{}                                   & 30.67       \\
Full-rank                                 & 30.27        \\
\bottomrule
\end{tabular}
\caption{Comparison of \textsc{Grass} Sampling Methods on Evaluation Perplexity during 60M LLaMA Pretraining on the RealNews Subset of C4. Best sampling strategy is bolded.}
\vspace{-0.1in}
\label{tab:sampling_comparison}
\end{table}

Table~\ref{tab:sampling_comparison} evaluates our proposed methods to compute the sparse projection $P$ matrix (in Section~\ref{subsec:compute_p}) for \GRASS{} during pretraining of a 60M LLaMA model on 500M tokens from the RealNews subset of C4.
We additionally consider the Frozen Top-$r$ method as a baseline by computing top indices once only at iteration 0. We notice that stochastic strategies employing non-replacement biased (NR) sampling generally surpass their with replacement unbiased (R) counterparts. Within the unbiased strategies (R), the variance reduction approach (Multinomial-Norm-R) outperforms the subspace preservation method (Multinomial-Norm$^2$-R), while their biased (NR) counterparts exhibit comparable performance. Both Multinomial-Norm$^2$-NR and Top-$r$ are competitive with \GaLore{}, while Uniform sampling underperforms. Similar trends in performance across sampling methods are observed during finetuning (Table~\ref{tab:glue}). We find that uniform sampling is more effective for pretraining than finetuning, likely because the norm distribution is more uniform at the onset of pretraining.

\vspace{-0.05in}
\section{Conclusion And Future Work}
\vspace{-0.05in}

\looseness=-1
In this work, we introduce \textsc{Grass}, a novel memory-efficient subspace optimization method for LLM pretraining and fine-tuning by leveraging sparse gradient projections. \textsc{Grass} significantly reduces the memory footprint of optimizer states and gradients and eliminates the need to materialize the full gradients during the projection step, leading to substantial computational efficiency gains. Our experimental results demonstrate that \textsc{Grass} achieves comparable performance to full-rank training and existing projection-based methods while offering a substantial memory reduction and throughput increase across various model sizes and tasks.
Future work will explore extending \textsc{Grass} to utilize diverse structured sparsity patterns and investigating strategies for dynamically adjusting the projection rank based on hardware and model size.

\section{Limitations}
While \textsc{Grass} offers compelling advantages in memory efficiency and training throughput, there are several aspects that warrant further investigation and potential improvements.
\paragraph{Implementation Complexity.} Unlike drop-in optimizer replacements, \textsc{Grass} requires integrating custom linear layers into the Transformer architecture, as the sparse projection operations occur during the backward pass. While this involves minimal code modifications, it introduces a slight complexity barrier for adoption compared to simply switching optimizers. Nonetheless, the significant gains in performance and memory efficiency outweigh this minor overhead.
\paragraph{Scalability to Larger Models.} Our empirical evaluation primarily focused on model scales up to 13B parameters. The effectiveness of \textsc{Grass} for significantly larger LLMs, exceeding hundreds of billions of parameters, requires further examination. Similarly, as batch sizes increase, the memory savings from sparse projection might become less prominent compared to the activation memory footprint. Exploring strategies to mitigate this potential issue, such as combining \textsc{Grass} with activation checkpointing techniques, would be beneficial.

\paragraph{Hyperparameter Sensitivity.} \textsc{Grass}'s performance depends on hyperparameters like rank \((r) \) and update frequency \((K) \). While our experiments provide insights into suitable ranges for these hyperparameters, a more comprehensive analysis of their impact on training dynamics, particularly as model scales increase, is crucial for maximizing performance and generalizability. Developing methods to automatically and adaptively tune these hyperparameters could further enhance \textsc{Grass}'s applicability.

\section{Ethical Considerations}
We acknowledge the potential ethical implications associated with large language models. These include:
\paragraph{Misuse Potential.} LLMs, being powerful text generation tools, can be misused to create harmful or misleading content, including disinformation, hate speech, and spam. While our work focuses on improving training efficiency, we strongly advocate for responsible use of LLMs and encourage further research on safeguards against malicious applications.
\paragraph{Bias Amplification.} LLMs are trained on massive text corpora, which can inherently contain biases and stereotypes. These biases can be amplified during training, leading to potentially discriminatory or unfair outputs. While \textsc{Grass} is unlikely to exacerbate this bias, we recognize the importance of addressing this issue through careful data curation, bias mitigation techniques, and ongoing monitoring of LLM behavior.
\paragraph{Environmental Impact.} Training large LLMs requires significant computational resources, which can have a substantial environmental footprint. Our work aims to reduce the computational cost and energy consumption of LLM training, contributing to more sustainable and environmentally responsible practices in NLP research.
\paragraph{Data and Licensing Considerations.} We have carefully considered the ethical implications of the datasets used in this work which are publicly released and have followed accepted privacy practices at creation time.
\begin{itemize}[noitemsep, leftmargin=10pt, topsep=0pt, partopsep=0pt]
\item MMLU and GLUE are released under the permissive MIT license, allowing for broad research use.
\item Alpaca is also distributed under the MIT license.
\item FLAN uses the Apache license, which permits both academic and commercial applications.
\item Dolma utilizes the ODC Attribution License, promoting open data sharing and reuse.
\end{itemize}

We strictly adhere to the license terms and intended use of these datasets, ensuring responsible handling of data and compliance with ethical guidelines. We acknowledge the ongoing need for critical assessment and transparency regarding data sources, potential biases, and licensing implications in LLM research.

\bibliography{custom}
\clearpage
\appendix

\section{Optimizer Functions}
\label{appendix:opt_functions}
\looseness=-1
In Equation~\eqref{eqn:full_parameter_update} and Algorithm~\ref{alg:meso}, we use functions \texttt{opt.init} and \texttt{opt.update} to abstractly represent any stateful optimizer's initialization and update function. Here we provide concrete implementations of these functions for Adam \citep{kingma2014adam} in Algorithm~\ref{alg:adam.init} and \ref{alg:adam.update}.\footnote{For any matrix $Z \in \mathbb{R}^{c \times d}$, we have $Z^{\circ 2}$ and $Z^{\circ \frac{1}{2}}$ to respectively denote the matrix which is the elementwise square and elementwise square root of $Z$.} We assume the parameter matrix $Z$ and its gradient $\nabla_Z L$ is of generic shape $\mathbb{R}^{c \times d}$.

\noindent
\begin{minipage}{\linewidth}
\begin{algorithm}[H]
\caption{Initialization of the Adam optimizer, \texttt{adam.init}}
\small
\begin{algorithmic}[1]
\Require $Z \in \mathbb{R}^{c \times d}$ (technically, Adam only requires knowing the shape of the parameter) 
\Ensure $S \in \mathbb{R}^{2 \times c \times d}$

~

\State $M \leftarrow \mathbf{0}_{c \times d}$ \algcomment{First gradient moment statistics}
\State $V \leftarrow \mathbf{0}_{c \times d}$ \algcomment{Second gradient moment statistics}

\State $S \leftarrow (M, V)$
\end{algorithmic}
\label{alg:adam.init}
\end{algorithm}
\end{minipage}

\noindent
\begin{minipage}{\linewidth}
\begin{algorithm}[H]
\small
\caption{Update of the Adam optimizer, \texttt{adam.update}. $\beta_1, \beta_2 \in [0, 1)$ are the exponential decay rates for the first and second gradient moment estimates. $t$ is the current iteration. $\eta > 0$ is the current iteration's learning rate. $\epsilon$ is a small constant used for numerical stability in division.}
\begin{algorithmic}[1]
\Require $S \in \mathbb{R}^{2 \times c \times d}$ the most recent optimizer state

$\nabla L(Z) \in \mathbb{R}^{c \times d}$ the current gradient of $Z$
\Ensure $S_{\textrm{new}} \in \mathbb{R}^{2 \times c \times d}$ the updated optimizer state

\;$U \in \mathbb{R}^{c \times d}$ the additive update matrix

~

\State $M, V \leftarrow S$ \algcomment{Unpack the states $M, V \in \mathbb{R}^{c \times d}$}

\State $M_{\textrm{new}} \leftarrow \beta_1 \cdot M + (1 - \beta_1) \cdot \nabla L(Z)$

\State $V_{\textrm{new}} \leftarrow \beta_2 \cdot V + (1 - \beta_2) \cdot \nabla L(Z)^{\circ 2}$

\State $S_{\textrm{new}} \leftarrow (M_{\textrm{new}},  V_{\textrm{new}})$

\State $M_{\star} \leftarrow M_{\textrm{new}} / (1 - \beta_1^t)$

\State $V_{\star} \leftarrow V_{\textrm{new}} / (1 - \beta_2^t)$

\State $U \leftarrow -\eta \cdot M_{\star} \oslash (V_{\star}^{\circ \frac{1}{2}} + \epsilon \cdot \mathbf{1}_{c \times d})$
\end{algorithmic}
\label{alg:adam.update}
\end{algorithm}
\end{minipage}

\section{Derivation of the Unified Algorithm of Memory-efficient Subspace Optimizers}

As we have described in Section~\ref{sec:meso}, MeSO optimizers solve the subspace optimization problem under the projection matrix $P \in \mathbb{R}^{m \times r}$:
\vspace{-0.15in}
\begin{spacing}{1}
{\small
\begin{align}
    \min_{A \in \mathbb{R}^{r \times n}} L(W_0 + PA)
\end{align}}
\end{spacing}
\vspace{-0.1in}
\noindent
by applying an off-the-shelf optimizer \texttt{opt}. Since we want to start at the initial weight matrix $W_0$, $A$ is initialized to be the zero matrix:
\vspace{-0.15in}
\begin{spacing}{1}{
\small
\begin{align}
    A^{(0)} &\leftarrow 0_{r \times n}\\
    S^{(0)} &\leftarrow \texttt{opt.update}(A^{(0)})
\end{align}}
\end{spacing}
\vspace{-0.15in}
\noindent
and updated through
\vspace{-0.15in}
\begin{spacing}{1}{
\small
\begin{align}
    S^{(t + 1)}, \Delta^{(t+1)} &\leftarrow \texttt{opt.update}(S^{(t)}, \frac{d}{dA} L(W_0 + PA^{(t)})) \\
    A^{(t+1)} &\leftarrow A^{(t)} + \Delta^{(t+1)}
\end{align}}
\end{spacing}
\vspace{-0.05in}
By chain rule, we have $\frac{d}{dA} L(W_0 + PA^{(t)}) = P^\top \nabla L(W_0 + PA^{(t)})$. 

When MeSO updates the projection matrix to be $P_{\textrm{new}}$, we can treat the new subspace optimization as having its $W_{0}^{\textrm{new}} = W_{0}^{\textrm{old}} + P_{\textrm{old}} A^{(t)}$ and re-initializing $A^{(t)}$ at $0_{r\times n}$ in addition to an optimizer state update using \tikz[baseline]{\node[fill=yellow!20,anchor=base, inner sep=0pt, outer sep=0pt] {$\texttt{update\_state}$}}. The pseudocode of this algorithm where we maintain the value of the $A$ matrix is given in Algorithm~\ref{alg:meso-2}.

\label{appendix:memeff}

\noindent
\begin{minipage}{\linewidth}
\begin{algorithm}[H]
\caption{Memory-efficient subspace optimization (MeSO) with an instantiated $A$ matrix}
\label{alg:meso-2}

\begin{algorithmic}[1]
\small
\Require Initial weights $W_0 \in \mathbb{R}^{m \times n}$ with $m \leq n$; update frequency $K$; total iterations $T$; subspace rank $r$ with $r \ll m$, an off-the-shelf optimizer \texttt{opt}; function to update the optimizer state, scale factor $\alpha$.

\Ensure Optimized weights $W^{(T)}$

~

\State $t \leftarrow 0$

\State $A^{(0)} \leftarrow 0_{r \times n}$

\State $S^{(0)} \leftarrow \texttt{opt.init}(A^{(0)})$ \algcomment{Adam state $\in \mathbb{R}^{r \times n}$}

\While{$t \leq T$}

\If{$t \equiv 0 \pmod{K}$}

\State $W_0 \leftarrow W_0 + PA^{(t)}$ \algcomment{record progress}

\State $A^{(t)} \leftarrow 0_{r \times n}$ \algcomment{reinitialize $A$}

\State \textcolor{magenta}{// Compute new projection matrix}

\State $P \leftarrow$ $\texttt{compute}_P$ $(\nabla L(W_0))$ \algcomment{$P \in \mathbb{R}^{m \times r}$}

\State \textcolor{magenta}{// [Optional] Update optimizer state}

\State $S^{(t)} \leftarrow \texttt{update\_state}(S^{(t)})$

\EndIf

\State $G_C \leftarrow P^\top \nabla L(W_0 + PA^{(t)})$ \algcomment{$G_C \in \mathbb{R}^{r \times n}$}

\State $S^{(t + 1)}, \Delta^{(t+1)} \leftarrow \texttt{opt.update}(S^{(t)}, G_C)$

\State $A^{(t+1)} \leftarrow A^{(t)} + \alpha \Delta^{(t+1)}$ \algcomment{Apply Update}

\State $t \leftarrow t + 1$
\EndWhile
\end{algorithmic}
\end{algorithm}
\end{minipage}

By defining $W^{(t)} \coloneqq W_0 + PA^{(t)}$, we can easily see that Algorithm~\ref{alg:meso-2} is equivalent to Algorithm~\ref{alg:meso} presented in the main paper.

\section{Additional Related Work}
\label{sec:literature_comparison}

\paragraph{Memory-Efficient Optimization.}
Several works aim to reduce the memory footprint of adaptive optimizer states. Techniques include factorizing second-order moment statistics \citep{shazeer2018adafactor}, quantizing optimizer states \citep{dettmers20218bit, NEURIPS2019_8f1fa019, dettmers2023qlora, NEURIPS2023_3122aaa2}, and fusing backward operations with optimizer updates to minimize gradient storage \citep{lv2023adalomo}. 
\textsc{Grass} is orthogonal to these approaches and proposes a gradient projection-based adaptive optimizer that significantly reduces memory costs by relying on projected gradient statistics.

\looseness=-1
\paragraph{Gradient Compression.}
In distributed and federated training, several gradient compression methods have been introduced to reduce the volume of transmitted gradient data. Common approaches include:
\begin{enumerate}[noitemsep, leftmargin=10pt, topsep=0pt, partopsep=0pt]
\looseness=-1
\item \textbf{Quantization:} Quantization aims to reduce the bit precision of gradient elements. Examples include 1-bit SGD \citep{seide2014-bit}, SignSGD \citep{pmlr-v80-bernstein18a}, 1-bit Adam \citep{pmlr-v139-tang21a}, TernGrad \citep{NIPS2017_89fcd07f}, and QSGD \citep{NIPS2017_6c340f25}. 
\looseness=-1
\item \textbf{Sparsification:} This involves transmitting only a small subset of significant gradient elements. Random-$k$ and Top-$k$ element select $k$ random or largest-magnitude elements, respectively to transmit. Top-$k$ generally exhibits better convergence \citep{NEURIPS2018_b440509a}, and requires communicating both values and indices \citep{DBLP:conf/iclr/LinHM0D18, 10.1145/3295500.3356222}.
\looseness=-1
\item \textbf{Low-Rank Decomposition:} This involves factorizing a gradient matrix \(M \in \mathbb{R}^{n \times m} \) as \( M \approx PQ^\top \) for transmission, where \(P \in \mathbb{R}^{n \times r} \) and \(Q \in \mathbb{R}^{m \times r} \) with \(r \ll \min(n, m) \). ATOMO \citep{wang2018atomo} employs SVD for decomposition, while Power-SGD \citep{NEURIPS2019_d9fbed9d} utilizes power iteration for more efficient low-rank factorization.
\end{enumerate}
Unlike existing methods, \textsc{Grass} introduces a novel approach by employing sparse projection of gradients to enhance memory efficiency in both local and distributed training contexts.

\section{Ensuring Unbiased Gradient Reconstruction}
\label{sec:unbiasedness}
In this section, we formally state the theorem that gives the form of the sampling distribution for $\sigma_j$ and $\rho_{jj}$ that ensures the reconstructed gradient $PP^\top G_W$ is unbiased which we describe in Section~\ref{subsec:compute_p}.
\begin{theorem}
    Let $B \in \{0, 1\}^{r \times m}$ be the sparse binary matrix with the unique non-zero index of $j$-th row being $\sigma_j \in [m]$.
    Let $\sigma_j \stackrel{\textit{i.i.d.}}{\sim} \textrm{Multinomial}(1, q)$) ($q \in \mathbb{R}^m$ with the probability of sampling integer $k \in [m]$ being $q_k$). If we correspondingly let the diagonal value of the diagonal matrix $\rho$ to be $\rho_{jj} \coloneqq \frac{1}{\sqrt{r q_{\sigma_j}}}$, then for the random projection matrix $P = (\rho B)^\top$, we have $\mathbb{E}[PP^\top G] = G$ for any (gradient) matrix $G \in \mathbb{R}^{m \times n}$.
\end{theorem}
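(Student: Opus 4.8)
The plan is to compute the expectation $\mathbb{E}[PP^\top G]$ directly by decomposing the matrix product into a sum over the $r$ rows of $B$ (equivalently, the $r$ columns of $P$) and using the i.i.d. structure and linearity of expectation. First I would write $PP^\top = (\rho B)^\top (\rho B) = B^\top \rho^2 B$, and observe that since $\rho$ is diagonal with entries $\rho_{jj} = 1/\sqrt{r q_{\sigma_j}}$, we have $B^\top \rho^2 B = \sum_{j=1}^r \rho_{jj}^2\, b_j b_j^\top$, where $b_j^\top \in \mathbb{R}^{1 \times m}$ is the $j$-th row of $B$, i.e. $b_j = e_{\sigma_j}$, the $\sigma_j$-th standard basis vector. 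Thus $PP^\top G = \sum_{j=1}^r \frac{1}{r q_{\sigma_j}} e_{\sigma_j} e_{\sigma_j}^\top G = \sum_{j=1}^r \frac{1}{r q_{\sigma_j}} e_{\sigma_j} G_{\sigma_j, :}$, where $G_{\sigma_j,:}$ denotes the $\sigma_j$-th row of $G$.

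Next I would take expectations term by term. Since the $\sigma_j$ are i.i.d., each term has the same expectation, so $\mathbb{E}[PP^\top G] = r \cdot \mathbb{E}\!\left[ \frac{1}{r q_{\sigma_1}} e_{\sigma_1} e_{\sigma_1}^\top G \right] = \mathbb{E}\!\left[ \frac{1}{q_{\sigma_1}} e_{\sigma_1} e_{\sigma_1}^\top \right] G$. Then, expanding the expectation over the multinomial draw $\sigma_1 \in [m]$ with $\mathbb{P}(\sigma_1 = k) = q_k$, I get
\begin{equation}
\mathbb{E}\!\left[ \frac{1}{q_{\sigma_1}} e_{\sigma_1} e_{\sigma_1}^\top \right] = \sum_{k=1}^m q_k \cdot \frac{1}{q_k}\, e_k e_k^\top = \sum_{k=1}^m e_k e_k^\top = I_m,
\end{equation}
assuming $q_k > 0$ for all $k$ so the rescaling is well-defined. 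Hence $\mathbb{E}[PP^\top G] = I_m G = G$, as claimed.

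There is no serious obstacle here; the proof is essentially a one-line computation once the product is rewritten in the basis-vector form. The only points requiring a little care are: (i) noting that left-multiplication by $B$ selects a row and right-multiplication by $B^\top$ scatters it back, so $b_j b_j^\top = e_{\sigma_j} e_{\sigma_j}^\top$ is a diagonal rank-one matrix whose action on $G$ picks out row $\sigma_j$; (ii) pulling the matrix $G$ out of the expectation, which is valid since $G$ is deterministic; and (iii) implicitly assuming $q$ has full support on $[m]$ so that $\rho_{jj}$ is finite — which I would state as a hypothesis (it holds for all the concrete choices of $q$ in Section~\ref{subsec:compute_p}). I would also remark that the factor of $r$ in the definition $\rho_{jj} = 1/\sqrt{r q_{\sigma_j}}$ is exactly what cancels the sum over the $r$ i.i.d. terms, which is why this particular normalization yields unbiasedness.
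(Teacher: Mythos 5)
Your proof is correct and follows essentially the same route as the paper's: decompose $PP^\top$ into the sum $\frac{1}{r}\sum_{j=1}^r \frac{1}{q_{\sigma_j}} e_{\sigma_j} e_{\sigma_j}^\top$, use linearity and the i.i.d.\ property to reduce to a single term, and expand the expectation over the multinomial draw to get $I_m$. Your explicit note that $q$ must have full support is a worthwhile clarification the paper leaves implicit, but the argument is otherwise identical.
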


\begin{proof}
Here we first write down the form of the random matrix product $PP^\top$. Let $e_j \in \mathbb{R}^{m}$ be the unit column vector with $j$-th coordinate being 1 and all other coordinates being zero. Then by definition, the $j$-th row vector of $B$ is $e_{\sigma_j}^\top$.

\newcommand{\vertline}[2]%
{\bgroup
  \sbox0{#2}%
  \rlap{\usebox0}%
  \hspace{0.5\wd0}%
  \makebox[0pt][c]{\rule[\dimexpr \ht0+1pt]{0.5pt}{#1}}%
  \makebox[0pt][c]{\rule[\dimexpr -\dp0-#1-1pt]{0.5pt}{#1}}%
  \hspace{0.5\wd0}%
\egroup}
\newcommand{\horizline}[2]%
{\bgroup
  \sbox0{#2}%
  \makebox[0pt][l]{\rule[\dimexpr\ht0/2]{#1}{0.5pt}}%
  \hspace{0.5\wd0}%
  \rlap{\usebox0}%
  \hspace{0.5\wd0}%
  \makebox[0pt][r]{\rule[\dimexpr\ht0/2]{#1}{0.5pt}}%
\egroup}

\begin{align}
    & PP^\top \\
    =& B^\top \rho^\top \rho B \\
    =& \begin{bmatrix} \vertline{5pt}{$e_{\sigma_1}$} & \ldots &\vertline{5pt}{$e_{\sigma_r}$} \end{bmatrix}_{m \times r}  \nonumber \\
    & \qquad \times\mathrm{diag}(\frac{1}{r\cdot q_{\sigma_1}}, \ldots, \frac{1}{r\cdot q_{\sigma_r}}) \nonumber \\
    & \qquad \qquad \times \begin{bmatrix} - e_{\sigma_1}^\top - \\ \vdots \\ - e_{\sigma_r}^\top - \end{bmatrix}_{r \times m} \\
    =& \frac{1}{r} \sum_{i=1}^r \frac{1}{q_{\sigma_i}} e_{\sigma_i} e_{\sigma_i}^\top \label{eqn:PPT_decomposition}
\end{align}

In \autoref{eqn:PPT_decomposition}, we have decomposed the matrix $PP^\top$ into the average of $r$ random rank-1 matrices each of which depends on on the randomness of a unique $\sigma_i$. By linearity of expectation and the \textit{i.i.d.} property of $\{\sigma_i\}_{i=1}^r$, we have
\begin{align}
    \mathbb{E}[PP^\top] =& \frac{1}{r} \sum_{i=1}^r \mathbb{E} [\frac{1}{q_{\sigma_i}} e_{\sigma_i} e_{\sigma_i}^\top] \\
    =& \mathbb{E} [\frac{1}{q_{\sigma_{\textcolor{blue}{1}}}} e_{\sigma_{\textcolor{blue}{1}}} e_{\sigma_{\textcolor{blue}{1}}}^\top]
\end{align}
Since $\sigma_1$ have a probability of $q_k$ to take the value of integer $k \in [m]$, we have
\begin{align}
    & \mathbb{E} [\frac{1}{q_{\sigma_{\textcolor{blue}{1}}}} e_{\sigma_{\textcolor{blue}{1}}} e_{\sigma_{\textcolor{blue}{1}}}^\top] \\
    =& \sum_{k=1}^m q_k \cdot \frac{1}{q_k} e_k e_k^\top \\
    =& I_{m \times m}
\end{align}
Thus we have proved that $\mathbb{E}[PP^\top] = I_{m \times m}$. By linearity of expectation, for any matrix $G \in \mathbb{R}^{m \times n}$, we thus have $\mathbb{E}[PP^\top G] = G$ and the proof is complete.
\end{proof}

\section{Proof of Theorem \ref{theorem:one}}
\label{sec:proof-one}

Here we restate the complete version of Theorem~\ref{theorem:one} and then present its proof.

\begin{restate}[Complete statement of Theorem~\ref{theorem:one}]

Let $B \in \{0, 1\}^{r \times m}$ be the sparse binary matrix with the unique non-zero index of $j$-th row being $\sigma_j \in [m]$.
Let $\sigma_j \stackrel{\textit{i.i.d.}}{\sim} \textrm{Multinomial}(1, q)$ ($q \in \mathbb{R}^m$ with the probability of sampling integer $k \in [m]$ being $q_k$). Given $\sigma_j$, we correspondingly set the diagonal value of the diagonal matrix $\rho$ to be $\rho_{jj} \coloneqq \frac{1}{\sqrt{r q_{\sigma_j}}}$ and define $P = (\rho B)^\top$. This induces an unbiased gradient estimator of $G \in \mathbb{R}^{m \times n}$: $PP^T G$.
Among all these gradient estimators induced by different parameter value $q$ of the multinomial distribution, the one that is proportional to the row norms of $G$ with $q_k = \frac{\|G_k\|_2}{\sum_{i=1}^m \|G_i\|_2}$ minimizes the total variance of the gradient estimate $PP^\top G$.
\end{restate}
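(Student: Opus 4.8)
The starting point is the decomposition of the random operator $PP^\top$ already established in Appendix~\ref{sec:unbiasedness}, namely $PP^\top = \frac{1}{r}\sum_{i=1}^r \frac{1}{q_{\sigma_i}} e_{\sigma_i} e_{\sigma_i}^\top$ with the $\sigma_i$ drawn i.i.d.\ from $\textrm{Multinomial}(1,q)$ and $\rho_{jj} = 1/\sqrt{r q_{\sigma_j}}$, which also gave unbiasedness $\mathbb{E}[PP^\top G] = G$. Writing $Z_i \coloneqq \frac{1}{q_{\sigma_i}} e_{\sigma_i} e_{\sigma_i}^\top G$, we have $PP^\top G = \frac{1}{r}\sum_{i=1}^r Z_i$ with the $Z_i$ i.i.d.\ and $\mathbb{E}[Z_i] = G$. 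The first step is to rewrite the total variance: since the estimator is unbiased, the total (trace-of-covariance) variance equals the mean squared Frobenius error, $\mathbb{E}\,\|PP^\top G - G\|_F^2 = \mathbb{E}\,\|PP^\top G\|_F^2 - \|G\|_F^2$.

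Next I would expand $\mathbb{E}\,\|\tfrac1r\sum_i Z_i\|_F^2 = \tfrac{1}{r^2}\big(\sum_i \mathbb{E}\|Z_i\|_F^2 + \sum_{i\ne j}\langle \mathbb{E} Z_i, \mathbb{E} Z_j\rangle_F\big)$; using $\mathbb{E} Z_i = G$ and the i.i.d.\ property this collapses to $\tfrac1r \mathbb{E}\|Z_1\|_F^2 + \tfrac{r-1}{r}\|G\|_F^2$. Subtracting $\|G\|_F^2$, the total variance becomes $\tfrac1r\big(\mathbb{E}\|Z_1\|_F^2 - \|G\|_F^2\big)$, so only $\mathbb{E}\|Z_1\|_F^2$ depends on $q$. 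A direct computation, conditioning on $\sigma_1 = k$ (which has probability $q_k$) and noting $\|e_k e_k^\top G\|_F^2 = \|G_k\|_2^2$, gives $\mathbb{E}\|Z_1\|_F^2 = \sum_{k=1}^m q_k \cdot \frac{1}{q_k^2}\|G_k\|_2^2 = \sum_{k=1}^m \frac{\|G_k\|_2^2}{q_k}$.

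It then remains to minimize $f(q) = \sum_{k=1}^m \|G_k\|_2^2/q_k$ over the probability simplex $\{q : q_k \ge 0,\ \sum_k q_k = 1\}$. I would do this with Cauchy--Schwarz: $\big(\sum_k \|G_k\|_2\big)^2 = \big(\sum_k \tfrac{\|G_k\|_2}{\sqrt{q_k}}\sqrt{q_k}\big)^2 \le \big(\sum_k \tfrac{\|G_k\|_2^2}{q_k}\big)\big(\sum_k q_k\big) = f(q)$, with equality iff $\|G_k\|_2/\sqrt{q_k} \propto \sqrt{q_k}$, i.e.\ $q_k \propto \|G_k\|_2$; imposing $\sum_k q_k = 1$ forces $q_k = \|G_k\|_2 / \sum_{i=1}^m \|G_i\|_2$. (Equivalently one can use a Lagrange-multiplier argument, which also certifies this is a minimum by convexity of $1/q_k$.)

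\textbf{Main obstacle.} The only real care needed is in the variance expansion --- correctly handling the $r(r-1)$ cross terms via independence and unbiasedness so that the $\|G\|_F^2$ pieces cancel cleanly and the objective reduces to a single scalar sum --- and in treating the degenerate case $\|G_k\|_2 = 0$, where the optimal $q_k = 0$ makes $\|G_k\|_2^2/q_k$ a $0/0$ that should be read as $0$ (or one restricts to rows with nonzero norm); the optimization itself is then a routine Cauchy--Schwarz/Lagrange step.
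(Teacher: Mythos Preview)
Your proposal is correct and follows essentially the same skeleton as the paper's proof: decompose $PP^\top$ as an i.i.d.\ sum, separate the diagonal and cross terms so the total variance reduces to $\tfrac{1}{r}\sum_k \|G_k\|_2^2/q_k$ plus terms independent of $q$, then optimize over the simplex. The packaging differs in two minor ways. First, the paper computes the matrix $\mathbb{E}[PP^\top PP^\top] = \tfrac{1}{r}\,\mathrm{diag}(1/q_1,\ldots,1/q_m) + \tfrac{r-1}{r}I$ and then traces against $G^\top(\cdot)G$, whereas you work directly with the scalar $\mathbb{E}\|Z_1\|_F^2$; these are the same calculation, yours just skips materializing the intermediate matrix. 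Second, for the final minimization the paper uses a Lagrange-multiplier stationarity argument (dropping the inequality constraints and checking feasibility of the stationary point), while your Cauchy--Schwarz step is cleaner: it delivers the tight lower bound $f(q)\ge\big(\sum_k\|G_k\|_2\big)^2$ and the equality condition in one stroke, without any need to argue separately that the stationary point is a minimum.
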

\newcommand{\tr}{\mathrm{tr}}

\begin{proof}
We first write down the total variance of the estimator $PP^\top G$:
\begin{align}
& \mathbb{E} \; \mathrm{tr}[(PP^\top G)^\top (PP^\top G)] \nonumber \\
& \qquad - \mathrm{tr}[\mathbb{E}[(PP^\top G)] \mathbb{E}[(PP^\top G)]^\top] \\
=& \tr[G^\top \mathbb{E} [PP^\top PP^\top] G] - \mathrm{tr}[GG^\top] \label{eqn:total_variance}
\end{align}

Since only the first term in \autoref{eqn:total_variance} is a function of $P$ and thus depends on the value of $q$, we first focus on analytically deriving the form of $\mathbb{E} [PP^\top PP^\top]$.

By the expression in \autoref{eqn:PPT_decomposition}, we have:
\begin{align}
    & PP^\top PP^\top \\
    =& \; \frac{1}{r^2} \sum_{i=1}^r \sum_{j=1}^r \frac{1}{q_{\sigma_i}}\frac{1}{q_{\sigma_j}} e_{\sigma_i} e_{\sigma_i}^\top e_{\sigma_j} e_{\sigma_j}^\top \\
    =& \; \frac{1}{r^2} \sum_{i=1}^r \frac{1}{q_{\sigma_i}^2} e_{\sigma_i} e_{\sigma_i}^\top e_{\sigma_i} e_{\sigma_i}^\top \nonumber \\
    & \quad + \frac{1}{r^2} \sum_{i=1, j=1, i\neq j}^r [\frac{1}{q_{\sigma_i}} e_{\sigma_i} e_{\sigma_i}^\top] [\frac{1}{q_{\sigma_j}} e_{\sigma_j} e_{\sigma_j}^\top] \\
    =& \; \frac{1}{r^2} \sum_{i=1}^r \frac{1}{q_{\sigma_i}^2} e_{\sigma_i} e_{\sigma_i}^\top \nonumber \\
    & \quad + \frac{1}{r^2} \sum_{i=1, j=1, i\neq j}^r [\frac{1}{q_{\sigma_i}} e_{\sigma_i} e_{\sigma_i}^\top] [\frac{1}{q_{\sigma_j}} e_{\sigma_j} e_{\sigma_j}^\top] \label{eqn:PPTPPT_form}
\end{align}
In the last step, we use the fact that for any $i$, $e_{\sigma_i}^\top e_{\sigma_i}= 1$. Now we take the expectation of \autoref{eqn:PPTPPT_form}. By applying linearity of expectation and the \textit{i.i.d.} property of $\{\sigma_j\}$, we have
\begin{align}
    & \mathbb{E} [PP^\top PP^\top] \\
    =& \frac{1}{r} \mathrm{diag}(\frac{1}{q_1}, \ldots, \frac{1}{q_m}) + \frac{r-1}{r} \cdot I_{m \times m}
\end{align}

As a result, we can express the first term in \autoref{eqn:total_variance} as
\begin{align}
    & \tr[G^\top \mathbb{E} [PP^\top PP^\top] G] \\
    =& \frac{1}{r} \tr[G^\top \mathrm{diag}(\frac{1}{q_1}, \ldots, \frac{1}{q_m}) G] + \frac{r-1}{r} \tr[GG^\top] \label{eqn:only_term}
\end{align}

If we represent the rows of $G$ as column vectors $\{G_k\}_{k=1}^m$, then the only term in \autoref{eqn:only_term} that depends on $q$ can be expressed as
\begin{align}
    &\tr[G^\top \mathrm{diag}(\frac{1}{q_1}, \ldots, \frac{1}{q_m}) G] \\
    =& \tr[\sum_{k=1}^m \frac{1}{q_k} G_kG_k^\top] \\
    =& \sum_{k=1}^m \frac{1}{q_k} \tr[G_kG_k^\top] \\
    =& \sum_{k=1}^m \frac{\|G_k\|_2^2}{q_k}  \label{eqn:equivalent_total_variance_objective}
\end{align}

Based on these derivations, to minimize the total variance is therefore equivalent to minimize \autoref{eqn:equivalent_total_variance_objective}. From now on, we denote $\lambda_i \coloneqq \|G_i\|_2$ as the 2-norm of the $i$-th row of matrix $G$.

\paragraph{Solving the variance-minimization problem:} As we have shown, minimizing the total variance of \(PP^\top G\) leads to the following optimization problem:
\begin{align}
\min_{p} & \sum_{i=1}^m \frac{\lambda_i^2}{q_i}  \label{eqn:min_total_variance_objective} \\
\text{subject to} \quad \sum_{i=1}^n q_i = &1, \quad q_i \ge 0 \text{ for all } i. \nonumber
\end{align}

Here we first ignore the inequality constraint $q_i \ge 0$ and solve the relaxed problem: 
\begin{align}
    \min_{p} & \sum_{i=1}^m \frac{\lambda_i^2}{q_i}  
    \label{eqn:relaxed_min_total_variance_objective}\\
    \text{subject to} & \quad \sum_{i=1}^n q_i = 1 \nonumber 
\end{align}
The Lagrangian \( L \) for this relaxed constrained optimization is:
\[
L(q, \mu) = \sum_{i=1}^m \frac{\lambda_i^2}{q_i} + \mu \left(\sum_{i=1}^m q_i - 1\right)
\]
where \( \mu \) is the Lagrange multiplier for the equality constraint.
The stationary condition for the Lagrangian gives us
\begin{align}
    & \frac{\partial L}{\partial q_i} = -\frac{\lambda_i^2}{q_i^2} + \mu = 0, \;\; \forall i \in [m] \\
    & \sum_{i=1}^m q_i = 1
\end{align}

Assuming not all $\lambda_i$ are zero, this gives us 
\[
q_i^* = \frac{\lambda_i}{\sum_{j=1}^m \lambda_j}
\]

Since this optimal solution to \autoref{eqn:relaxed_min_total_variance_objective} also lies in the constraint space of \autoref{eqn:min_total_variance_objective}, this is also the optimal solution of the optimization we care about.

Thus we have shown that the distribution parameter $q$ that minimizes the total variance of the gradient estimate is proportional to the row 2-norm of $G$.

\end{proof}

\section{Row Norms and Subspace Embedding Property}
\label{sec:proof-two}
The following proof is from \citet{magdon-ismail2010row} which can be roughly stated as sampling with squared row-norms preserves subspaces up to additive error with high probability.

\begin{theorem}[Subspace Preservation]
Let $\mathbf{A} \in \mathbb{R}^{m \times d_1}$ with rows $\mathbf{a}_t$. Define a sampling matrix $\mathbf{Q} \in \mathbb{R}^{m \times m}$ using row-sampling probabilities:
\[ p_t \geq \frac{\|\mathbf{a}_t\|^2}{\|\mathbf{A}\|_F^2}. \]
If $r \geq \frac{4p_A \ln \frac{2d_1}{\delta}}{\beta^2}$, then with probability at least $1 - \delta$, it follows that:
\[ \|\mathbf{A}^\top \mathbf{A} - \tilde{\mathbf{A}}^\top \tilde{\mathbf{A}}\| \leq \epsilon \|\mathbf{A}\|^2. \]
\end{theorem}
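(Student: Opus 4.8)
The plan is to recognize $\tilde{\mathbf{A}}^\top \tilde{\mathbf{A}}$ as an unbiased sum-of-i.i.d.-rank-one estimator of $\mathbf{A}^\top \mathbf{A}$ and to control the deviation with a matrix concentration (Bernstein/Chernoff) inequality. Write the rescaled sampling operator as $\mathbf{Q}$, so that $\tilde{\mathbf{A}} = \mathbf{Q}\mathbf{A}$ selects $r$ row indices $i_1,\dots,i_r$ i.i.d. with $\Pr[i_k = t] = p_t$ and scales the chosen row $\mathbf{a}_t$ by $1/\sqrt{r p_t}$. Then
\[
\tilde{\mathbf{A}}^\top \tilde{\mathbf{A}} = \sum_{k=1}^r \mathbf{X}_k, \qquad \mathbf{X}_k = \frac{1}{r\, p_{i_k}}\, \mathbf{a}_{i_k} \mathbf{a}_{i_k}^\top ,
\]
and since $\mathbb{E}[\mathbf{X}_k] = \tfrac{1}{r}\sum_{t=1}^m p_t \cdot \tfrac{1}{p_t}\, \mathbf{a}_t\mathbf{a}_t^\top = \tfrac{1}{r}\mathbf{A}^\top\mathbf{A}$, the centered sum $\mathbf{Z} := \sum_{k}(\mathbf{X}_k - \mathbb{E}\mathbf{X}_k) = \tilde{\mathbf{A}}^\top\tilde{\mathbf{A}} - \mathbf{A}^\top\mathbf{A}$ is mean zero; it is moreover symmetric, so no Hermitian dilation is needed.

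Next I would establish the two inputs a matrix Bernstein bound requires, each from the sampling hypothesis $p_t \ge \|\mathbf{a}_t\|^2/\|\mathbf{A}\|_F^2$. For the uniform bound, $\|\mathbf{X}_k\| = \frac{\|\mathbf{a}_{i_k}\|^2}{r\, p_{i_k}} \le \frac{\|\mathbf{A}\|_F^2}{r}$ almost surely, so each centered summand has norm at most $R := 2\|\mathbf{A}\|_F^2/r$. For the variance, using $\mathbf{a}_t\mathbf{a}_t^\top\mathbf{a}_t\mathbf{a}_t^\top = \|\mathbf{a}_t\|^2 \mathbf{a}_t\mathbf{a}_t^\top$ together with $\mathbb{E}[(\mathbf{X}_k - \mathbb{E}\mathbf{X}_k)^2] = \mathbb{E}[\mathbf{X}_k^2] - (\mathbb{E}\mathbf{X}_k)^2 \preceq \mathbb{E}[\mathbf{X}_k^2]$,
\[
\sum_{k=1}^r \mathbb{E}\big[(\mathbf{X}_k - \mathbb{E}\mathbf{X}_k)^2\big] \preceq \frac{1}{r}\sum_{t=1}^m \frac{\|\mathbf{a}_t\|^2}{p_t}\, \mathbf{a}_t\mathbf{a}_t^\top \preceq \frac{\|\mathbf{A}\|_F^2}{r}\, \mathbf{A}^\top\mathbf{A},
\]
so the matrix variance proxy obeys $\sigma^2 \le \|\mathbf{A}\|_F^2\,\|\mathbf{A}\|^2/r$. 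I would then invoke the matrix Bernstein inequality (Tropp; equivalently the matrix Chernoff bound used in \citealp{magdon-ismail2010row}), giving
\[
\Pr\!\big[\|\mathbf{Z}\| \ge \epsilon\|\mathbf{A}\|^2\big] \le 2 d_1 \exp\!\left(\frac{-\epsilon^2\|\mathbf{A}\|^4/2}{\sigma^2 + R\,\epsilon\|\mathbf{A}\|^2/3}\right).
\]
Substituting the bounds on $R$ and $\sigma^2$ and using $\epsilon \le 1$, the exponent is $\Theta\!\big(r\epsilon^2\|\mathbf{A}\|^2/\|\mathbf{A}\|_F^2\big)$; requiring this to exceed $\ln(2d_1/\delta)$ and solving yields the stated threshold $r \gtrsim p_A \ln(2d_1/\delta)/\epsilon^2$, where I read $p_A = \|\mathbf{A}\|_F^2/\|\mathbf{A}\|^2$ (the stable rank of $\mathbf{A}$) and interpret the hypothesis's $\beta$ as $\epsilon$. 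On the complementary event, $\|\mathbf{A}^\top\mathbf{A} - \tilde{\mathbf{A}}^\top\tilde{\mathbf{A}}\| = \|\mathbf{Z}\| \le \epsilon\|\mathbf{A}\|^2$, which is the claim.

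The concentration inequality is a black box once the setup is in place, so I expect the delicate parts to be: (a) the variance step — justifying the psd ordering $\mathbb{E}[(\mathbf{X}_k-\mathbb{E}\mathbf{X}_k)^2]\preceq\mathbb{E}[\mathbf{X}_k^2]$ from $(\mathbb{E}\mathbf{X}_k)^2\succeq 0$ and then pulling the probability lower bound through the sum to get a clean $\preceq \tfrac{\|\mathbf{A}\|_F^2}{r}\mathbf{A}^\top\mathbf{A}$; and (b) constant bookkeeping so the final sample-size threshold emerges in the advertised form with the $\ln(2d_1/\delta)$ union-bound factor — in particular verifying that for $\epsilon\le 1$ the $\sigma^2$ term dominates the $R\epsilon\|\mathbf{A}\|^2/3$ term in the denominator, so the bound is governed by the variance regime. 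A secondary, purely expository, obstacle is reconciling the (slightly garbled) statement: identifying $p_A$ with the stable rank, $\beta$ with $\epsilon$, and noting that $\|\cdot\|$ denotes the spectral norm throughout.
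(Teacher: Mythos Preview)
Your proposal is correct and gives a clean, self-contained proof via matrix Bernstein, but it is not the route the paper takes. The paper does \emph{not} apply a concentration inequality directly to the rank-one sum $\sum_k \tfrac{1}{r p_{i_k}}\mathbf{a}_{i_k}\mathbf{a}_{i_k}^\top$. Instead, it first passes to the SVD $\mathbf{A}=\mathbf{U}_A\mathbf{S}_A\mathbf{V}_A^\top$, rewrites $\mathbf{A}^\top\mathbf{A}-\mathbf{A}^\top\mathbf{Q}^\top\mathbf{Q}\mathbf{A}$ as $\mathbf{V}_A\mathbf{S}_A(\mathbf{U}_A^\top\mathbf{U}_A-\mathbf{U}_A^\top\mathbf{Q}^\top\mathbf{Q}\mathbf{U}_A)\mathbf{S}_A\mathbf{V}_A^\top$, and then invokes a separate black-box lemma (``Sampling in Orthogonal Spaces'', taken from \citealp{magdon-ismail2010row}) that controls $\|\mathbf{S}_1\mathbf{W}^\top\mathbf{V}\mathbf{S}_2-\mathbf{S}_1\mathbf{W}^\top\mathbf{Q}^\top\mathbf{Q}\mathbf{V}\mathbf{S}_2\|$ for orthogonal $\mathbf{W},\mathbf{V}$ under sampling probabilities tied to the rescaled row norms. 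The theorem then falls out by specializing $\mathbf{W}=\mathbf{V}=\mathbf{U}_A$ and $\mathbf{S}_1=\mathbf{S}_2=\mathbf{S}_A$.

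What each approach buys: yours is more elementary and transparent---the variance and uniform-norm estimates you derive from $p_t\ge\|\mathbf{a}_t\|^2/\|\mathbf{A}\|_F^2$ are exactly the right inputs, and you avoid introducing an auxiliary lemma. The paper's SVD-then-lemma route is more modular: Lemma~1 is stated for two different matrices $\mathbf{W},\mathbf{V}$ (equivalently for products $\mathbf{A}^\top\mathbf{B}$ rather than just $\mathbf{A}^\top\mathbf{A}$), so the reduction buys a slightly more general statement for free, at the cost of hiding the concentration step entirely inside the cited lemma. Your interpretation of $p_A$ as the stable rank $\|\mathbf{A}\|_F^2/\|\mathbf{A}\|^2$ and of $\beta$ as $\epsilon$ is the right reading of the (somewhat garbled) statement.
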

\begin{proof}
Considering the singular value decompositions (SVDs) of $\mathbf{A}$ and $\mathbf{B}$, we have:
\begin{align*}
\|\mathbf{A}^\top\mathbf{B} &- \mathbf{A}^\top\mathbf{Q}^\top\mathbf{Q}\mathbf{B}\| = \|\mathbf{V}_A \mathbf{S}_A \mathbf{U}_A^\top \mathbf{U}_B \mathbf{S}_B \mathbf{V}_B^\top 
\\ &- \mathbf{V}_A \mathbf{S}_A \mathbf{U}_A^\top \mathbf{Q}^\top\mathbf{Q} \mathbf{U}_B \mathbf{S}_B \mathbf{V}_B^\top\|.
\end{align*}
We may now directly apply Lemma \ref{lemma:1}, with respect to the appropriate sampling probabilities. One can verify that the sampling probabilities are proportional to the sum of the rescaled squared norms of the rows of $\mathbf{A}$ and $\mathbf{B}$.
\end{proof}

\begin{lemma}[Sampling in Orthogonal Spaces]
\label{lemma:1}
Let $\mathbf{W} \in \mathbb{R}^{m \times d_1}$ and $\mathbf{V} \in \mathbb{R}^{m \times d_2}$ be orthogonal matrices, and let $\mathbf{S}_1$ and $\mathbf{S}_2$ be positive diagonal matrices in $\mathbb{R}^{d_1 \times d_1}$ and $\mathbb{R}^{d_2 \times d_2}$, respectively. Consider row sampling probabilities:
\[ p_t \geq \frac{1}{\|\mathbf{S}_1\|_F^2} \mathbf{W}^\top \mathbf{S}_1^2 \mathbf{W}_t + \frac{1}{\|\mathbf{S}_2\|_F^2} \mathbf{V}^\top \mathbf{S}_2^2 \mathbf{V}_t. \]
If $r \geq \left(8(p_1 + p_2)/\beta^2\right) \ln \frac{2(d_1+d_2)}{\delta}$, then with probability at least $1 - \delta$, it holds that:
\[ \|\mathbf{S}_1 \mathbf{W}^\top \mathbf{V} \mathbf{S}_2 - \mathbf{S_1} \mathbf{W}^\top \mathbf{Q}^\top \mathbf{Q} \mathbf{V} \mathbf{S}_2\| \leq \epsilon \|\mathbf{S}_1\| \|\mathbf{S}_2\|. \]
\end{lemma}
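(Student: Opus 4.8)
The plan is to view $\mathbf{S}_1\mathbf{W}^\top\mathbf{V}\mathbf{S}_2 - \mathbf{S}_1\mathbf{W}^\top\mathbf{Q}^\top\mathbf{Q}\mathbf{V}\mathbf{S}_2$ as an off-diagonal block of a \emph{symmetric} Gram-matrix sampling error and then apply a matrix Bernstein / matrix Chernoff inequality for sums of i.i.d.\ PSD rank-one matrices. First I would form the augmented matrix $\mathbf{M} \coloneqq \bigl[\,\mathbf{W}\mathbf{S}_1/\|\mathbf{S}_1\|_F \;\; \mathbf{V}\mathbf{S}_2/\|\mathbf{S}_2\|_F\,\bigr] \in \mathbb{R}^{m\times(d_1+d_2)}$. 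Using $\mathbf{W}^\top\mathbf{W}=I_{d_1}$ and $\mathbf{V}^\top\mathbf{V}=I_{d_2}$, the off-diagonal block of $\mathbf{M}^\top\mathbf{M}$ equals $\mathbf{S}_1\mathbf{W}^\top\mathbf{V}\mathbf{S}_2/(\|\mathbf{S}_1\|_F\|\mathbf{S}_2\|_F)$ and that of $\mathbf{M}^\top\mathbf{Q}^\top\mathbf{Q}\mathbf{M}$ is the same expression with $\mathbf{Q}^\top\mathbf{Q}$ inserted; since the spectral norm of a matrix dominates that of any of its submatrices, it suffices to establish $\|\mathbf{M}^\top\mathbf{M} - \mathbf{M}^\top\mathbf{Q}^\top\mathbf{Q}\mathbf{M}\| \le \epsilon/\sqrt{p_1 p_2}$ with probability $\ge 1-\delta$, where I read $p_i \coloneqq \|\mathbf{S}_i\|_F^2/\|\mathbf{S}_i\|^2$ as the stable rank of $\mathbf{S}_i$ (and $\beta=\epsilon$, the statement using the two symbols interchangeably).

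Next I would expand the row-sampling operator as $\mathbf{Q}^\top\mathbf{Q} = \tfrac1r\sum_{k=1}^r \tfrac1{p_{i_k}} e_{i_k}e_{i_k}^\top$, where $i_1,\dots,i_r$ are drawn i.i.d.\ from the (normalized) sampling distribution and the $k$-th selected row is rescaled by $1/\sqrt{r p_{i_k}}$; then $\mathbb{E}[\mathbf{Q}^\top\mathbf{Q}]=I_m$, so $\mathbf{M}^\top\mathbf{Q}^\top\mathbf{Q}\mathbf{M} = \sum_{k} Y_k$ with $Y_k = \tfrac1{r p_{i_k}}\mathbf{m}_{i_k}\mathbf{m}_{i_k}^\top$ a sum of i.i.d.\ PSD rank-one matrices whose expectation is $\mathbf{M}^\top\mathbf{M}$. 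The two inputs to matrix Bernstein come straight from the sampling hypothesis: the $t$-th row of $\mathbf{M}$ satisfies $\|\mathbf{m}_t\|^2 = \mathbf{W}_t^\top\mathbf{S}_1^2\mathbf{W}_t/\|\mathbf{S}_1\|_F^2 + \mathbf{V}_t^\top\mathbf{S}_2^2\mathbf{V}_t/\|\mathbf{S}_2\|_F^2$, which is exactly the quantity lower-bounding $p_t$, so $\|\mathbf{m}_t\|^2/p_t = O(1)$ and the (centered) summands obey a uniform bound $R = O(1/r)$; and $\mathbb{E}[Y_1^2] \preceq \tfrac1{r^2}\mathbf{M}^\top\mathbf{M}$, so the variance proxy is $v = O\bigl(\tfrac1r\|\mathbf{M}^\top\mathbf{M}\|\bigr)$, with $\|\mathbf{M}^\top\mathbf{M}\| = \|\mathbf{M}\mathbf{M}^\top\| \le \|\mathbf{S}_1\|^2/\|\mathbf{S}_1\|_F^2 + \|\mathbf{S}_2\|^2/\|\mathbf{S}_2\|_F^2 = 1/p_1 + 1/p_2$.

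Finally I would invoke the matrix Bernstein bound $\Pr[\|\sum_k (Y_k - \mathbb{E}Y_k)\| > s] \le 2(d_1+d_2)\exp\!\bigl(-\tfrac{s^2/2}{v + Rs/3}\bigr)$ with target $s = \epsilon/\sqrt{p_1 p_2}$; for $\epsilon \le 1$ the variance term dominates $Rs/3$, so the failure probability falls below $\delta$ once $r \ge \tfrac{8(p_1+p_2)}{\epsilon^2}\ln\tfrac{2(d_1+d_2)}{\delta}$, the constant $8$ absorbing the Bernstein constants together with the $O(1)$ slack from normalizing the sampling probabilities. Restricting the resulting spectral bound to the off-diagonal block and multiplying back by $\|\mathbf{S}_1\|_F\|\mathbf{S}_2\|_F = \sqrt{p_1 p_2}\,\|\mathbf{S}_1\|\|\mathbf{S}_2\|$ yields $\|\mathbf{S}_1\mathbf{W}^\top\mathbf{V}\mathbf{S}_2 - \mathbf{S}_1\mathbf{W}^\top\mathbf{Q}^\top\mathbf{Q}\mathbf{V}\mathbf{S}_2\| \le \epsilon\,\|\mathbf{S}_1\|\|\mathbf{S}_2\|$.

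The step I expect to be the main obstacle is getting the Frobenius-norm rescaling of the two halves of $\mathbf{M}$ exactly right: it is this normalization that simultaneously produces the uniform bound $R = O(1/r)$ \emph{and} the clean estimate $\|\mathbf{M}^\top\mathbf{M}\| \le 1/p_1 + 1/p_2$, and hence makes $v$ and the target accuracy $s$ combine into the advertised $p_1 + p_2$ sample complexity; leaving $\mathbf{M}$ unrescaled (or rescaling by $\|\mathbf{S}_i\|$ rather than $\|\mathbf{S}_i\|_F$) produces a worse, non-matching bound, e.g.\ a $p_1 p_2$ factor or a spurious $\|\mathbf{S}_1\|/\|\mathbf{S}_2\|$ dependence. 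A secondary concern is pinning down which version of the matrix concentration inequality (Ahlswede--Winter, Oliveira, or Tropp's matrix Bernstein) delivers the stated constant $8$, though any of them gives the bound up to a universal constant.
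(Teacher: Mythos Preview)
The paper does not actually contain a proof of this lemma: Section~F states the lemma and attributes it to \texttt{magdon-ismail2010row}, using it only as a black box in the short proof of the surrounding Subspace Preservation theorem. So there is no in-paper argument to compare against.

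That said, your proposal is correct and is essentially the argument one would expect from the cited source (whose title advertises a non-commutative Bernstein bound). The dilation trick---stacking $\mathbf{W}\mathbf{S}_1/\|\mathbf{S}_1\|_F$ and $\mathbf{V}\mathbf{S}_2/\|\mathbf{S}_2\|_F$ side by side so that the quantity of interest becomes an off-diagonal block of a symmetric Gram error---is the standard reduction, and your identification of $p_i = \|\mathbf{S}_i\|_F^2/\|\mathbf{S}_i\|^2$ as stable ranks (and $\beta=\epsilon$) is the only reading under which the sample-complexity inequality is dimensionally sensible, despite the overloaded notation in the stated lemma. Your bookkeeping checks out: the row-norm identity $\|\mathbf{m}_t\|^2 = \mathbf{W}_t^\top\mathbf{S}_1^2\mathbf{W}_t/\|\mathbf{S}_1\|_F^2 + \mathbf{V}_t^\top\mathbf{S}_2^2\mathbf{V}_t/\|\mathbf{S}_2\|_F^2$ gives $R=O(1/r)$ after normalizing the sampling weights (which sum to $2$, not $1$, so a factor of $2$ is absorbed into the constant), the triangle-inequality bound $\|\mathbf{M}\mathbf{M}^\top\|\le 1/p_1+1/p_2$ is correct, and plugging $s=\epsilon/\sqrt{p_1p_2}$ into matrix Bernstein with variance $v\le (2/r)(p_1+p_2)/(p_1p_2)$ yields exactly $r\ge 8(p_1+p_2)\epsilon^{-2}\ln\!\bigl(2(d_1+d_2)/\delta\bigr)$ once one notes $(p_1+p_2)/\sqrt{p_1p_2}\ge 2$ so the variance term dominates for $\epsilon\le 1$. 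Your caveat about the precise constant is fair but immaterial to the paper's use of the lemma.
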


\section{Detailed Breakdown of Compute, Memory, and Communication Volume}
\label{sec:comparisons}

In this section we provide detailed breakdown of the compute, memory, and communication volume for different optimization methods. We focus our discussion to a single weight matrix $W \in \mathbb{R}^{m \times n}$ and its gradient $G \in \mathbb{R}^{m \times n}$.
We describe the relevant notation and parameter shape below:
\begin{itemize}[noitemsep, leftmargin=10pt, topsep=0pt, partopsep=0pt]
    \item By chain rule, we have \(G = (\nabla_y L)^\top X\), where \( \nabla_y L \) is a \( b \times m \) matrix, \( X \) is an \( b \times n \) matrix, where $m \leq n$ and $b$ is the token batch size usually much larger than $m,n$. Here we assume $\nabla_y L$ and $X$ are constructed ahead of time and we are interested in the memory, floating-point operations, and communication volume to construct the gradients $G$, update the optimizer state, and update the parameter weights.
    \item \( P \) is an \( m \times r \) projection matrix with $r \ll m$.
    \item $C$ is the number of optimizer operations per gradient element.
    \item For \textsc{Grass}, we can decompose $P^\top = \rho B$ where $\rho$ is a $r\times r$ diagonal scaling matrix, $B \in {0, 1}^{r\times m}$ is a sparse binary row selection matrix. Both left multiplication by $\rho$ and $B$ can be computed efficiently.
\end{itemize}
We compare various optimization strategies: \textbf{Full}, \textbf{\GaLore{}}, \textbf{LoRA}, \textbf{ReLoRA}, \textbf{\Flora{}}, and our proposed method \textbf{\textsc{Grass}}. All numbers for each method are computed based on the implementation original papers. We additionally consider \textbf{Efficient \GaLore{}}, which combines \GaLore{} with our proposed efficient matrix associativity implementation for reduced FLOPs and a custom hook for reduced communication. As we shall see, even compared to this more efficient implementation of \GaLore{}, our method \GRASS{} still enjoys competitive advantages.

\begin{table*}[htb]
\centering
\small
\begin{tabular}{llr|r}
\toprule
\textbf{Method} & \textbf{Regular Step Component} & \textbf{Cost}  & \textbf{Projection Update Cost} \\
\midrule
Full & compute $G_W = (\nabla_y L)^\top X$ & $mbn$ & N/A \\
& optimizer \texttt{opt.update} & $Cmn$ & \\
& weight update $W^{(t+1)} \leftarrow W^{(t)} + \Delta W^{(t)}$ & $mn$ & \\
\midrule
LoRA & compute $G_W = (\nabla_y L)^\top X$ & $mbn$ & N/A \\
($W = W_0 + BA$) & compute gradient  $\nabla_B L$ and $\nabla_B L$ & $2rmn$ & \\
& optimizer \texttt{opt.update} & $C(rm+rn)$ & \\
& weight update $B^{(t+1)} \leftarrow B^{(t)} + \Delta B^{(t)}$  & $rn + rm$ & \\
& \qquad \qquad \qquad $A^{(t+1)} \leftarrow A^{(t)} + \Delta A^{(t)}$ & & \\
\midrule
ReLoRA & Compute $G_W = (\nabla_y L)^\top X$ & $mbn$ & merge weights \\
($W = W_0 + BA$) & compute gradient for LoRA weights & $2rmn$ & $W_0 \leftarrow W_0 + B^{(t)} A^{(t)}$ \\
& optimizer \texttt{opt.update} & $C(rm+rn)$ & $mnr + mn$ \\
& weight update $B^{(t+1)} \leftarrow B^{(t)} + \Delta B^{(t)}$  & $rn + rm$ & \\
& \qquad \qquad \qquad $A^{(t+1)} \leftarrow A^{(t)} + \Delta A^{(t)}$ & & \\
\midrule
\GaLore{} & compute $G = (\nabla_y L)^\top X$ & $mbn$ & SVD of $G_W$ \\
& compute $P^\top G$ & $rmn$ & $mn \min(n, m)$ \\
& optimizer \texttt{opt.update} & $Crn$ & \\
& compute $\alpha P \Delta^{(t+1)}$ & $rmn$ & \\
& weight update $W^{(t+1)} \leftarrow W^{(t)} + \alpha P \Delta^{(t+1)}$ & $mn$ & \\
\midrule
\Flora{} & compute $G = (\nabla_y L)^\top X$ & $mbn$ & sample the Gaussian matrix \\
& compute $P^\top G$ & $rmn$ & $P_{ij} \stackrel{\textit{i.i.d.}}{\sim} \mathcal{N}(0, 1/r)$\\
& optimizer \texttt{opt.update} & $Crn$ & $mr$ \\
& compute $\alpha P \Delta^{(t+1)}$ & $rmn$ & \\
& weight update $W^{(t+1)} \leftarrow W^{(t)} + \alpha P \Delta^{(t+1)}$ & $mn$ & \\
\midrule
Efficient \GaLore{} & compute $P^\top (\nabla_y L)^\top $ & $rmb$ & SVD of $G_W$ \\
& compute $(P^\top (\nabla_y L)^\top )X$ & $rbn$ & $mn \min(n, m)$)\\ 
& optimizer \texttt{opt.update} & $C rn$& \\
& compute $\alpha P \Delta^{(t+1)}$ & $rmn$ & \\
& weight update $W^{(t+1)} \leftarrow W^{(t)} + \alpha P \Delta^{(t+1)}$ & $mn$ & \\
\midrule
\textsc{Grass} (ours) & compute $B (\nabla_y L)^\top$ & $0$ & compute row norms \\
& compute $(B (\nabla_y L)^\top)X$ & $rbn$ & and perform  \\
& compute $\rho ((B (\nabla_y L)^\top)X)$ & $rn$ & multinomial sampling$^*$  \\
& optimizer \texttt{opt.update} & $C rn$& $mn + m+r^\dagger$ \\
& compute $\alpha P \Delta^{(t+1)}$ & $rn$ & \\
& (only need to compute the non-zero rows) & & \\
& parameter update $W^{(t+1)} \leftarrow W^{(t)} + \alpha P \Delta^{(t+1)}$ & $rn$ & \\
& (only need to compute the non-zero rows) & & \\
\bottomrule
\end{tabular}
\caption{Detailed FLOPs Analysis for Various Methods. $^\dagger$This is the complexity of Alias Method for multinomial sampling. For the deterministic method Top-$r$, the total complexity would be $mn + m \log r$ using a heap.}
\label{tab:flops}
\end{table*}

\subsection*{Compute Requirements}
\autoref{tab:flops} details the FLOPs (per worker) calculation for the baselines and \textsc{Grass}. We provide a breakdown of the computation cost of each step in the Regular optimization step as well as the computation cost of computing the new projection matrix. As we can see, \GRASS{} is considerably more compute-efficient than all other methods -- most importantly, its compute cost does not contain the most expensive term $mbn$ unlike all the other published methods. Although Efficient \GaLore{} also avoids full parameter gradient computation $mbn$ by using our proposed multiplication rule, it still pays a much higher cost when it computes and performs the weight update ($rmn + mn$) compared to \GRASS{} ($2rn$).

\begin{table*}[htb]
\centering
\small
\begin{tabular}{>{\bfseries}lccc}
\toprule
\textbf{Method} & \textbf{Weights} & \textbf{Optimizer State} & \textbf{Gradient Memory} \\
\midrule
Full    & $mn$ & $2mn$ & $mn$ \\
LoRA    & $mn + mr + nr$ & $2mr + 2nr$ & $mr + nr$ \\
ReLoRA  & $mn + mr + nr$ & $2mr + 2nr$ & $mr + nr$ \\
\GaLore{}  & $mn$ & $mr + 2nr$ & $mn$ \\
\Flora{}   & $m  n$ & $mr + 2nr$ & $mn$ \\
Efficient \GaLore{}  & $mn$ & $mr + 2nr$ & $nr$ \\
\GRASS{} (ours) & $mn$ & $2r + 2nr$ & $nr$ \\
\bottomrule
\end{tabular}
\caption{Memory Requirements for Various Methods. Note that memory cost for the projection update step is intermittent and not included.}
\label{tab:memory_req}
\end{table*}

\subsection*{Memory Requirements}
\autoref{tab:memory_req} summarizes the memory requirements for the various baselines and \textsc{Grass} when we use Adam as the (internal) optimizer for each method.
\begin{itemize}
    \item In terms of storing the weight parameters, every method needs to store the full parameter matrix of shape $m \times n$, while LoRA and ReLoRA also requires storing the low-rank updateable parameters (the $B$ and $A$ matrix)
    \item  In terms of the optimizer state, LoRA and ReLoRA needs to store both the first and second moment estimates for its $B$ and $A$ matrix. For all the MeSO methods, the optimizer state of the implicit $A$ matrix needs to be stored. Besides, these methods also need to store the projection matrix $P$. Here, unlike the other MeSO methods which employ dense $P$ matrices, \GRASS{} can store its sparse projection matrix $P$ using $2r$ numbers instead of $mr$ numbers.
    \item In terms of the gradient memory, with our proposed regrouped matrix multiplication implementation, \GRASS{} never materializes the full parameter's gradient matrix, thus reducing the gradient memory size to only the projection result of shape $r \times n$.
\end{itemize}

\subsection*{Communication Volume}
\autoref{tab:comm} summarizes the communication volume of gradients (per device) for various methods when we use distributed data parallel (DDP) training. Here all the existing methods perform all-reduce on the full-parameter gradient. In contrast, \GRASS{} never materializes the full paramater gradient and performs all-reduce directly on the projected matrix, saving the communication volume from $mn$ to $nr$.

\begin{table}[htb]
\centering
\small
\begin{tabular}{>{\bfseries}lcc}
\toprule
\textbf{Method} & \textbf{Comm Volume} \\
\midrule
Full    & $mn$ \\
LoRA    & $mr + nr$ \\
ReLoRA  & $mr + nr$ \\
\GaLore{}  & $mn^*$ \\
\Flora{}   & $mn^*$ \\
Efficient \GaLore{}  & $nr$ \\
\GRASS{} (ours) & $nr$ \\
\bottomrule
\end{tabular}
\caption{Gradient Communication Volume for Various Optimizers. $^*$ Note that \GaLore{} and \Flora{} communication volume can be reduced to $nr$ using a communication hook.}
\label{tab:comm}
\end{table}

\noindent
\begin{algorithm*}[htb]
\caption{Distributed \textsc{Grass} Training with PyTorch DDP}
\label{alg:grass_ddp}
\begin{algorithmic}[1]
\Require Initial weights $W_0 \in \mathbb{R}^{m \times n}$, total iterations $T$, subspace rank $r$, world size $p$, learning rate scale $\alpha$, update frequency $K$
\Ensure Optimized weights $W^{(T)}$
\State Initialize distributed environment (e.g., NCCL)
\State $W \gets W_0$ \algcomment{Set weights as non-trainable}
\State Introduce virtual trainable parameter $\texttt{vparams} \in \mathbb{R}^{1 \times 1}$, linked to each weight matrix
\State $\texttt{vparams.wgrad} \gets \emptyset$ \algcomment{Initialize storage for compressed gradients}
\State Initialize a DDP model with custom gradient hooks
\For{$t = 0$ to $T-1$}
\State Compute local loss $L$ for the current mini-batch
\State $output \gets$ Forward pass using $W$

\If{$t \equiv 0 \pmod{K}$}
\State Compute backward pass to obtain full gradient $G_W$ 
\State \textcolor{gray}{// Sketch gradient using column norms and select top-$r$}
\State $G_{sketch} \gets \texttt{ToprColumns}(G_W, r)$
\State \textcolor{gray}{// All-reduce and update the sketched matrix}
\State $G_{sketch} \gets \texttt{AllReduceMean}(G_{sketch})$
\State Update projection matrix $P$ using  $G_{sketch}$, compute and store compressed gradient $G_C$ in $\texttt{vparams.grad}$
\Else
\State Compute backward pass, capturing compressed gradients $G_C$ in $\texttt{vparams.grad}$
\State Perform all-reduce on $\texttt{vparams.grad}$ across all workers
\EndIf
\State Update $W$ using $\texttt{vparams.grad}$
\EndFor
\State \textbf{return} $W$
\State
\Function{ToprColumns}{$grad$, $r$}
\State $indices \gets \texttt{argsort}(\lvert \texttt{colnorms}(grad) \rvert)[-r:]$ \Comment{Identify indices of top-$r$ column norms}
\State \textbf{return} $grad[:, indices]$
\EndFunction
\end{algorithmic}
\end{algorithm*}

\section{Distributed Data Parallel Implementation}
\label{sec:DDP_hack}

To optimize memory usage in PyTorch's Distributed Data Parallel (DDP) framework \citep{paszke2019pytorch}, we implement strategic modifications to our model architecture aimed at enhancing distributed training efficiency (see Algorithm \ref{alg:grass_ddp}). Specifically, we designate the weights in the linear layers as non-trainable to circumvent the default memory allocation for full-sized gradient matrices. Instead, we introduce virtual, trainable parameters— occupying merely 1 byte each—linked to each weight matrix. These virtual parameters hold the compressed gradient of the corresponding weight matrix in the  $\texttt{wgrad}$ attribute. This method capitalizes on DDP’s asynchronous all-reduce capabilities while preventing unnecessary memory allocation.

\section{Experiment Hyperparameters}
\label{sec:hyperparameters}

\subsection{Pretraining}

We introduce details of the LLaMA architecture and hyperparameters used for pretraining. Table~\ref{tab:hyperparameters} shows the dimensions of LLaMA models across model sizes. We pretrain models on the C4 subset of Dolma \footnote{\url{https://huggingface.co/datasets/allenai/dolma}}. C4 is a colossal, clean version of Common Crawl designed to pretrain language models and word representations in English \citep{Raffel2019ExploringTL}.

\begin{table*}[htbp]
\centering
\small
\begin{tabular}{lrrrrrr}
\toprule
\textbf{Params} & \textbf{Hidden} & \textbf{Intermediate} & \textbf{Heads} & \textbf{Layers} & \textbf{Steps} & \textbf{Data amount} \\
\midrule
60M  & 512  & 1376 & 8  & 8   & 3.8K   & 1.0B \\
350M & 1024 & 2736 & 16 & 24  & 20.6K   & 5.4B \\
1B  & 2048 & 5461 & 24 & 32  & 33.6K  & 8.8B \\
7B  & 4096 & 11008& 32 & 32  & - & - \\
13B  & 5120 & 13824 & 40 & 40  & -  & -  \\
\bottomrule
\end{tabular}
\caption{Model dimensions for the various LLaMA models. We report the training steps and data amount in tokens for the 60M, 350M, and 1B models.}
\label{tab:hyperparameters}
\end{table*}

For pretraining all models we use a max sequence length of 256 for all models, with a batch size of 262144 tokens. For all baseline experiments, we adopt learning rate warmup for the first 1000 steps, and use cosine annealing for the learning rate schedule, decaying to 10\% of the initial learning rate. 
\textsc{Grass}, \GaLore{} and \Flora{} use a projection matrix update frequency of 200. \textsc{Grass} uses an additional warmup at each update for 200 steps and resets optimizer states for the 60M and 350M training runs, while the 1B run does not require resetting optimizer states. Both 60M and 350M \textsc{Grass} pretraining jobs uses Top-$r$ selectionwhile the 1B job uses Multinomial sampling without replacement.

For all methods on each size of models, we tune learning rate from a set of \{0.01, 0.005, 0.001, 0.0005, 0.0001\}, and the best learning rate is chosen based on the validation perplexity (or train perplexity when a validation does not exist as in Dolma). All MeSO models use a scale factor $\alpha=0.25$. We find that \GaLore{} is sensitive to hyperparameters and exhibits loss spikes and divergence at the prescribed learning rates in the paper (0.01) particularly at the 1B scale, and as a result we have to train using reduced learning rates where we no longer observe such spikes.
The learning rates of \textsc{Grass} and \GaLore{} are higher than the full model which would display instability at values greater than $0.001$. 
Unless otherwise specified, we average losses using a window of 15 steps. We use Adam with the default hyperparameters ($\beta_1 = 0.9, \beta_2 = 0.999, \epsilon=10^{-8}$). 

All models were trained on four 80GB A100 GPUs. The training times were as follows: 100 GPU hours for the 60M model, 200 GPU hours for the 250M model, and 650 GPU hours for the 1B model.

\begin{table*}[ht]
\centering
\small
\begin{tabular}{lccccccccc}
\toprule
& \textbf{MNLI} & \textbf{SST-2} & \textbf{MRPC} & \textbf{CoLA} & \textbf{QNLI} & \textbf{QQP} & \textbf{RTE} & \textbf{STS-B} \\
\midrule
\textbf{Batch Size} & 32 & 32 & 32 & 32 & 32 & 32 & 32 & 32 \\
\textbf{\# Epochs} & 3 & 3 & 3 & 3 & 3 & 3 & 3 & 3 \\
\textbf{Learning Rate} & 2E-05 & 2E-05 & 3E-05 & 2E-05 & 2E-05 & 2E-05 & 2E-05 & 2E-05 \\
\textbf{Rank Config.} & $r = 8$ & $r = 8$ & $r = 8$ & $r = 8$ & $r = 8$ & $r = 8$ & $r = 8$ & $r = 8$ \\
{$\mathbf{\alpha}$} & 2 & 2 & 2 & 2 & 2 & 2 & 2 & 2 \\
\textbf{Max Seq. Len.} & 128 & 128 & 128 & 128 & 128 & 128 & 128 & 128 \\
\bottomrule
\end{tabular}
\caption{Hyperparameters of finetuning RoBERTa base for \textsc{Grass}.}
\label{tab:finetuning}
\end{table*}

\subsection{Finetuning}

We finetune the pretrained RoBERTa-Base\footnote{\url{https://huggingface.co/FacebookAI/roberta-base}} model \citep{liu2019roberta}  on the GLUE benchmark\footnote{\url{https://huggingface.co/datasets/nyu-mll/glue} } \citep{wang2018glue} using the pretrained model on Hugging Face. GLUE is a natural language understanding benchmark and includes a variety of tasks, including single sentence tasks like CoLA \citep{warstadt2018neural}, SST-2 \citep{socher-etal-2013-recursive}; similarity and paraphrase tasks like MRPC \citep{dolan2005automatically}, QQP, STS-B \citep{cer-etal-2017-semeval}; and inference tasks such as MNLI \citep{williams2017broadcoverage}, QNLI \citep{rajpurkar2016squad}, RTE and WNLI \citep{levesque2012winograd}.

We report accuracy for SST-2, MNLI, QNLI and RTE. For CoLA and STS-B, we use Matthew’s Correlation and Pearson-Spearman Correlation as the metrics, respectively. For MRPC and QQP, we report the average of F1 score and accuracy. We report the best performance out of three seeds due to the instability of the method. We train all models for 3 epochs using a max sequence length of 128, and a batch size of 32. We report the best performance at the end of an epoch. We use a projection update frequency of 100 for all methods.
We tuned the learning rate and scale factor $\alpha$ for \GaLore{}, \Flora{}, LoRA and \textsc{Grass} from $ \{ 1e-5, 2e-5, 3e-5, 4e-5, 5e-5 \}$ and scale factors $\{1,2,4,8, 16\}$.
We apply the projection matrices or LoRA to target modules ``query'', ``value'', ``key'', ``intermediate.dense'' and ``output.dense'' and use a rank $r=8$. We use Adam with the default hyperparameters ($\beta_1 = 0.9, \beta_2 = 0.999, \epsilon=10^{-8}$). 
All experiments were run on a single A100 GPU in under 24 hours.

Table~\ref{tab:finetuning} shows the hyperparameters used for finetuning RoBERTa-Base for \textsc{Grass}.

\subsection{Instruction Tuning}
\looseness=-1 
We finetune the pretrained LLaMA 7B~\footnote{\url{https://huggingface.co/huggyLLaMA/LLaMA-7b}} model  from HuggingFace on the 52k samples from Alpaca~\footnote{\url{https://huggingface.co/datasets/tatsu-lab/alpaca}}, and the 100k samples from Flan-v2 in Tulu~\footnote{\url{https://huggingface.co/datasets/arazd/tulu\_flan/}}. 
We evaluate the finetuned model on the MMLU~\footnote{\url{https://huggingface.co/datasets/cais/mmlu}}  benchmark~\citep{hendrycks2020measuring}, which covers 57 tasks including elementary mathematics, US history, computer science, and law.

We use a constant learning rate that we tune in $ \{ 1e-5, 2e-5, 3e-5, 4e-5, 5e-5 \}$ for each method and use a constant scale factor $\alpha = 16$. (see \autoref{tab:ift_lr}). We use Adam with the default hyperparameters ($\beta_1 = 0.9, \beta_2 = 0.999, \epsilon=10^{-8}$). Additionally, we use a source and target sequence length of $512$.
 
\begin{table}[H]
\centering
\begin{tabular}{|c|c|c|}
\hline
\textbf{Method} & \textbf{Alpaca} & \textbf{Flan} \\ \hline
LoRA            & $1 \times 10^{-4}$ & $1 \times 10^{-4}$ \\ 
\textsc{Grass}           & $1 \times 10^{-6}$ & $5 \times 10^{-6}$ \\ 
Full            & $1 \times 10^{-5}$ & $1 \times 10^{-5}$ \\ 
\GaLore{}          & $1 \times 10^{-6}$ & $1 \times 10^{-6}$ \\ 
\Flora{}           & $1 \times 10^{-6}$ & $1 \times 10^{-6}$ \\ \hline
\end{tabular}
\caption{Learning rates for the different methods for instruction finetuning on Alpaca and Flan-v2.}
\label{tab:ift_lr}
\end{table}

All experiments use 4 A100 80GB GPUs and take about 48 GPU hours overall.

\paragraph{Alpaca Prompt Format}
The Alpaca prompt format is designed to generate context-dependent text completions. Here, the prompt consists of a task description followed by specific input providing further context. An example of the structured prompt in Alpaca is provided below:

\noindent
\begin{Verbatim}[fontsize=\small]
ALPACA_PROMPT_DICT = {
"prompt_input": (
    "Below is an instruction that describes a 
    task, paired with an input that provides 
    further context. Write a response that 
    appropriately completes the request.
    \n\n### Instruction:\n{instruction}\n\n
    ### Input:\n{input}\n\n### Response: "
),
"prompt_no_input": (
    "Below is an instruction that describes a
    task. Write a response that appropriately 
    completes the request.\n\n### 
    Instruction:\n{instruction} \n\n### Response: "
),
}
\end{Verbatim}

\paragraph{Flan Prompt Format}
The FLAN-v2 dataset in the JSON Lines format contains detailed conversational exchanges between a user and an assistant. Each line in the raw file represents a single conversational instance, encapsulated as a JSON object with multiple messages. 
Our processing script reads these lines and formats them:
\begin{itemize}
    \item iterates over each line in the file, parsing the JSON to extract the conversation.
    \item collects and concatenates all user messages to form the input text for each instance.
    \item extracts the assistant's response to form the corresponding output text.
    \item outputs a simplified JSON structure with `input` and `output` fields for each conversational instance.
\end{itemize}

\subsection{Throughput Benchmarking}
We benchmark pretraining throughput on a single 80GB A100 GPU and AMD EPYC 7763 64-Core Processor using a total batch size of 1024, rank $64$, and a sequence length of 256 across models. We use the following per device batch sizes: 60M (256), 350M (64), 1B (16), 7B (16), 13B (1). The 7B model runs into OOM when training with Full rank so the estimated throughput is only for the forward and backward pass without an optimizer update (overestimate). \GaLore{} and Full unlike \textsc{Grass} cannot train 13B model on the 80GB GPU so we skip this data point. The throughput estimate is based on 200 iterations on the C4 dataset.

We benchmark finetuning throughput on a single 80GB A100 GPU using a total batch size of 1024, rank $64$, and a sequence length 256 across models. We use the following per device batch sizes: 60M (256), 350M (64), 1B (16), 7B (16), 13B (1). \textsc{Grass}, \GaLore{}, and LoRA are only applied to the attention and MLP linear layers while the other weights are set as non-trainable.
The throughput estimate is based on 200 iterations.

\subsection{Communication Benchmarking}
For the weak scaling throughput experiments we use a local batch size of 16, a total batch size of $16 \times \text{num\_workers}$ and a projection rank of $256$ across all methods and model sizes.

\subsection{Ablations}
For the ablation experiments \textbf{Effect of Update Frequency} and \textbf{\texttt{compute}$_P$} Methods, we pretrain using 500M tokens from the RealNews subset of C4 \citep{DBLP:journals/jmlr/RaffelSRLNMZLL20}. The RealNews subset\footnote{\url{https://huggingface.co/datasets/allenai/c4}} contains 1.81M lines in the train set and 13.9K lines in the validation set.

\section{Experiments: Pretraining Memory}
\label{app:pretraining_memory}

For estimating memory for pretraining we use a token batch size of 256 and a rank $r=128$ across models. We don't use the layerwise trick in \citet{zhao2024galore} since this is currently inefficient during distributed training.
As the GPU memory usage for a specific component is hard to measure directly, we estimate the memory usage of the weight parameters and optimizer states for each method on different model sizes. The estimation is based on the number of original parameters, the model dimensions, and the number of low-rank parameters, all trained in BF16 format.

As an example, to estimate the memory requirements for the 13B model, we compute memory consumption across different components: activations, parameters, gradients, and optimizer states. 

\paragraph{Parameter Definitions}

Let the following variables define our 13B model's configuration:
\begin{itemize}
    \item $L$: sequence length (256)
    \item $B$: batch size (1)
    \item $D$: model hidden size (5120)
    \item $N$: number of layers (40)
    \item $H$: number of attention heads (40)
    \item $V$: vocabulary size (32000)
    \item $r$: rank (128)
\end{itemize}

\begin{figure*}[htb]
\small
$$ 
\begin{aligned}
    \text{Layer Normalization} & = B \cdot L \cdot D \cdot 2 \\
    \text{Embedding Elements} & = B \cdot L \cdot D \\
    \text{QKV} & = \text{Embedding Elements} \cdot 2 \\
    \text{QKT} & = 2 \cdot \text{Embedding Elements} \cdot 2 \\
    \text{Softmax} & = B \cdot H \cdot L^2 \cdot 2 \\
    \text{PV} & = \frac{\text{Softmax}}{2} + \text{Embedding Elements} \cdot 2 \\
    \text{Out Projection} & = \text{Embedding Elements} \cdot 2 \\
    \text{Attention Block Activation} & = \text{Layer Normalization} + \text{QKV} + \text{QKT} + \text{Softmax} + \text{PV} + \text{Out Projection} \\
    \text{FF1} & = \text{Embedding Elements} \cdot 2 \\
    \text{GELU} & = \text{Embedding Elements} \cdot 4 \cdot 2 \\
    \text{FF2} & = \text{Embedding Elements} \cdot 4 \cdot 2 \\
    \text{Feed-Forward Activation} & = \text{Layer Normalization} + \text{FF1} + \text{GELU} + \text{FF2} \\
    \text{Final Layer Activation} & = \text{Embedding Elements} \cdot 2 \\
    \text{Model Activations} & = \text{Layer Normalization} + (N \cdot (\text{Attention Block Activation} + \text{Feed-Forward Activation})) \\ &+ \text{Final Layer Activation} \\
    \text{Cross-Entropy Loss} & = B \cdot L \cdot V \cdot 2 + B \cdot L \cdot V \cdot 4 \\
    \text{Total Cross-Entropy} & = \text{Cross-Entropy Loss} \\
    \text{Total Activation Memory} & = \text{Model Activations} + \text{Total Cross-Entropy}
\end{aligned}
$$

\caption{Activation memory estimation for the different baselines.}
\label{fig:activation_memory}

\end{figure*}

\subsection{Activation Memory Calculation}

The activation memory calculation is conducted by accounting for each significant computation within the model layers, including attention mechanisms and feed-forward networks. Each term in \autoref{fig:activation_memory} considers the BF16 precision used for storing the activations.

\subsection{Memory Calculation for Parameters and Gradients}

Memory for parameters and gradients is estimated as follows:
\begin{itemize}
    \item Total number of parameters across all layers: Computed by summing up all parameter tensors within the model.
    \item Parameter memory in bytes: Total number of parameters multiplied by 2 (assuming BF16 precision).
    \item Gradient memory: For Full-rank and \textsc{GaLore} this equals the parameter memory if all parameters are trainable and gradients are stored in BF16. For \textsc{Grass} this equals the projected gradient memory corresponding to the trainable parameters.
\end{itemize}

\subsection{Optimizer State Memory Calculation}

\begin{itemize}
\item  The Adam optimizer in pure BF16 precision stores the first and second moment estimates for each parameter, requiring $2mn$ floats for a weight matrix with dimensions $m \times n$.
\item MeSO methods, including \textsc{Grass}, reduce optimizer state memory by projecting gradients into a lower-dimensional subspace. \textsc{Grass}, using sparse projections, needs $2r + 2nr$ floats to store the first and second moment estimates of the compressed gradient ($G_C \in \mathbb{R}^{r \times n}$) and the sparse projection matrix ($P \in \mathbb{R}^{m \times r}$). \GaLore{} and \Flora{}, which use dense projection matrices, require $mr + 2nr$ floats for the optimizer states.
\end{itemize}

\subsection{Total Memory Estimation}

The total memory required for the model during training is calculated by summing the memory for parameters, gradients, activations, and optimizer states, along with any additional memory overhead as per the adaptation method used.

For \textsc{Grass} applied to the 13B model, the memory costs are detailed as follows:
\begin{itemize}
    \item Total Parameters: Approximately 13 Billion
    \item Activation Memory: 1936.25 MB
    \item Parameter Memory: 24825.79 MB
    \item Gradient Memory: 1230.79 MB
    \item Optimizer State Memory: 2461.72 MB
    \item Extra Memory (for largest parameter tensor): 312.50 MB
    \item Total Memory: 30767.05 MB
\end{itemize}

\begin{figure*}[tb]
    \centering
    \includegraphics[width=\textwidth]{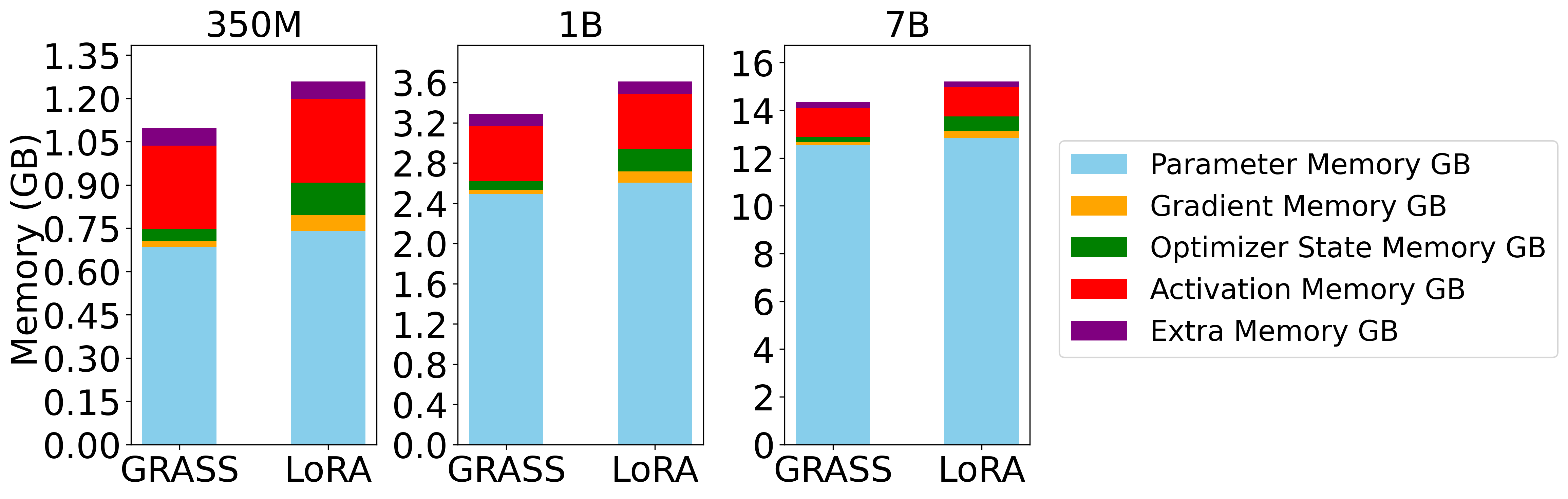}
    \caption{LLaMA finetuning memory footprint of \textsc{Grass} and LoRA for rank $r=64$, sequence length $256$, batch size $1$.}
    \label{fig:mem2}
\end{figure*}
\begin{figure*}[tb]
    \centering
    \includegraphics[width=\textwidth]{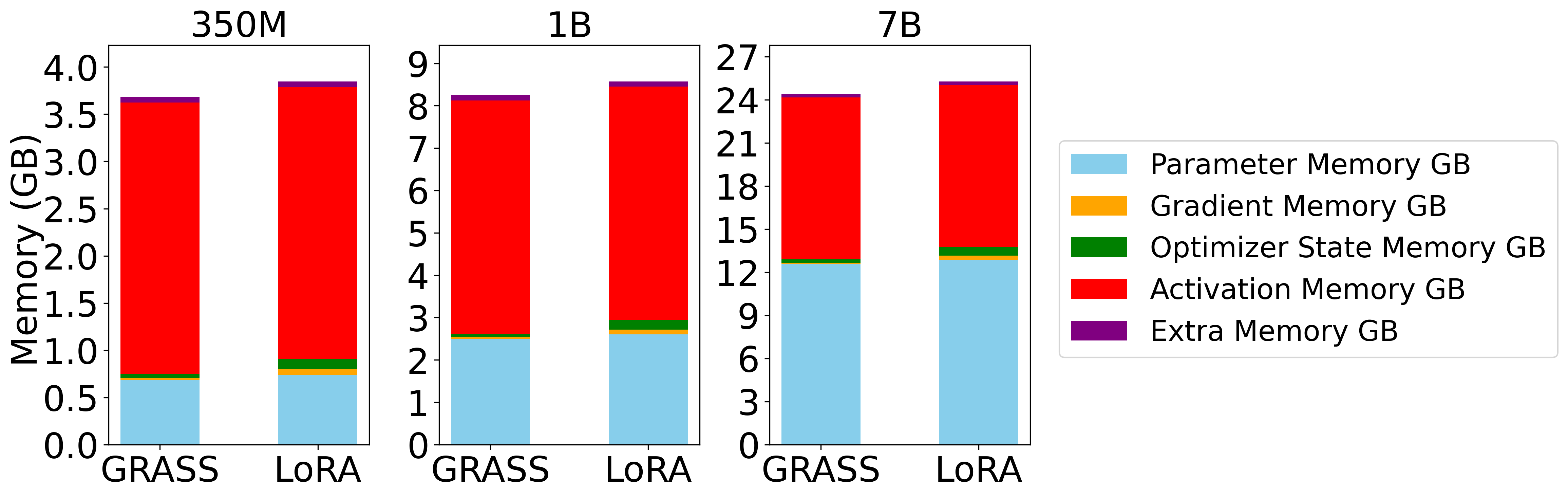}
    \caption{LLaMA finetuning memory footprint of \textsc{Grass} and LoRA for rank $r=64$, sequence length $512$, batch size $4$.}
    \label{fig:mem3}
\end{figure*}

\section{Experiment: Finetuning Memory}

In \autoref{fig:mem2} and \autoref{fig:mem3}, we compare the finetuning memory footprint of \textsc{Grass} and LoRA when finetuning a LLaMA model at various scales (350M, 1B, 7B) using token batch sizes of 256 and 2048 
(4$\times$512), respectively. Both methods are applied to all linear layers with a fixed rank of 64. Our analysis reveals that at larger batch sizes, activations predominantly contribute to the memory footprint, resulting in comparable memory usage between \textsc{Grass} and LoRA.

We estimate memory requirements for finetuning using the same aproach from Section \ref{app:pretraining_memory} but only accounting for the gradients and optimizer states corresponding to the trainable (instead of all the) parameters.
Furthermore, LoRA requires storing in addition to $X$ (the input to the layer), the activations corresponding to the low-rank input $XA$ to compute the gradient of $B$, where $A$ and $B$ are the low-rank adapters \citep{zhang2023lorafa}. This results in an additional memory requirement for LoRA of $ 2 BLr$ bytes per linear layer.

\section{Experiments: Throughput}
\autoref{fig:throughput_rank} compares the normalized pretraining throughput (using the Full model) of \textsc{Grass} and \GaLore{} across 60M, 350M, and 1B model sizes. We find that the throughput advantage of \textsc{Grass} over \GaLore{} and Full is $>25\%$ for the 1B model at rank 64. The throughput approaches that of the full model, as model size decreases or projection rank increases.

\begin{figure}[tb]
    \centering
    \includegraphics[width=0.5\textwidth]{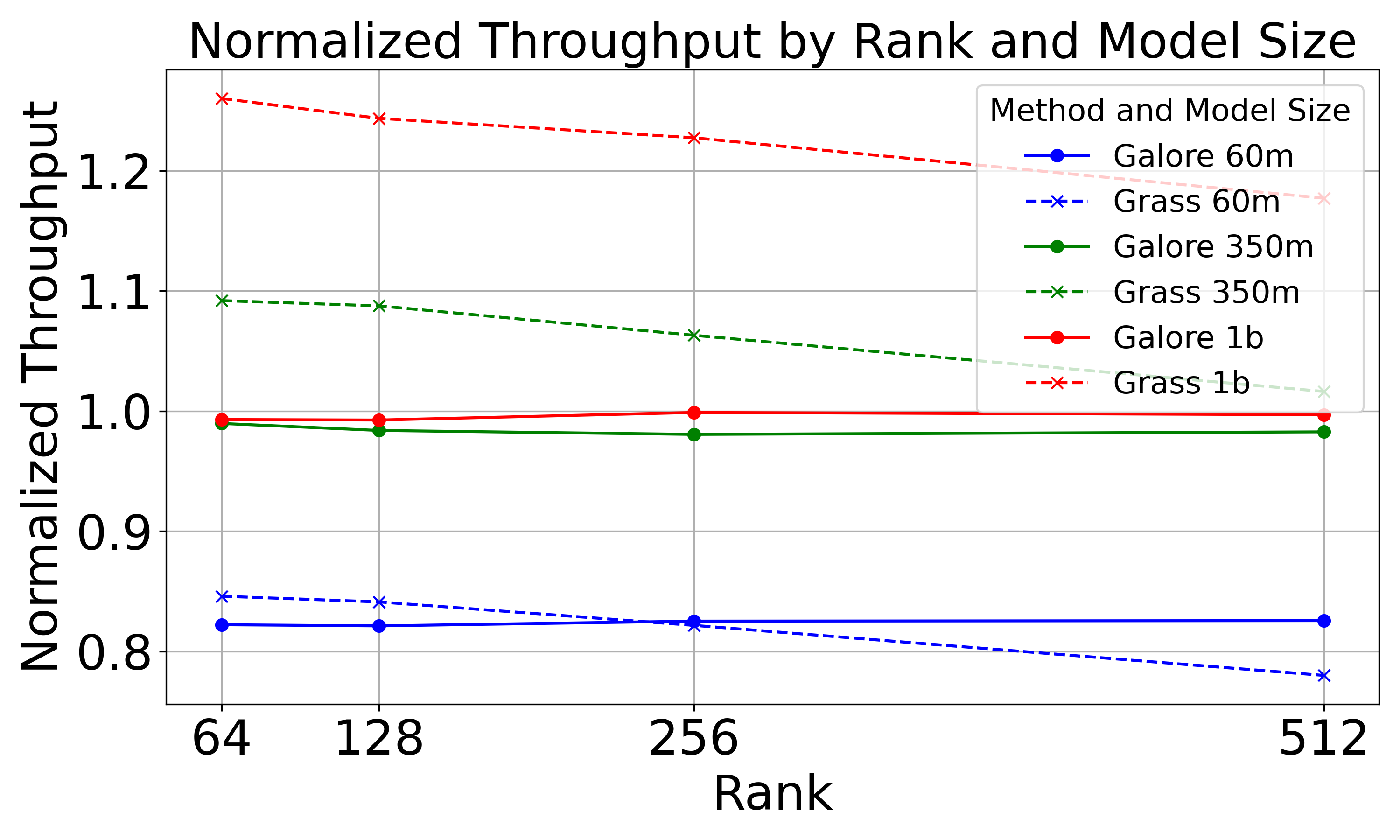}
    \caption{Rank vs Pretraining Throughput for \textsc{Grass}, LoRA and \GaLore{} across 60M, 350M, 1B and 7B model sizes.}
    \label{fig:throughput_rank}
\end{figure}

\begin{figure}[tb]
    \centering
    \includegraphics[width=0.55\textwidth]{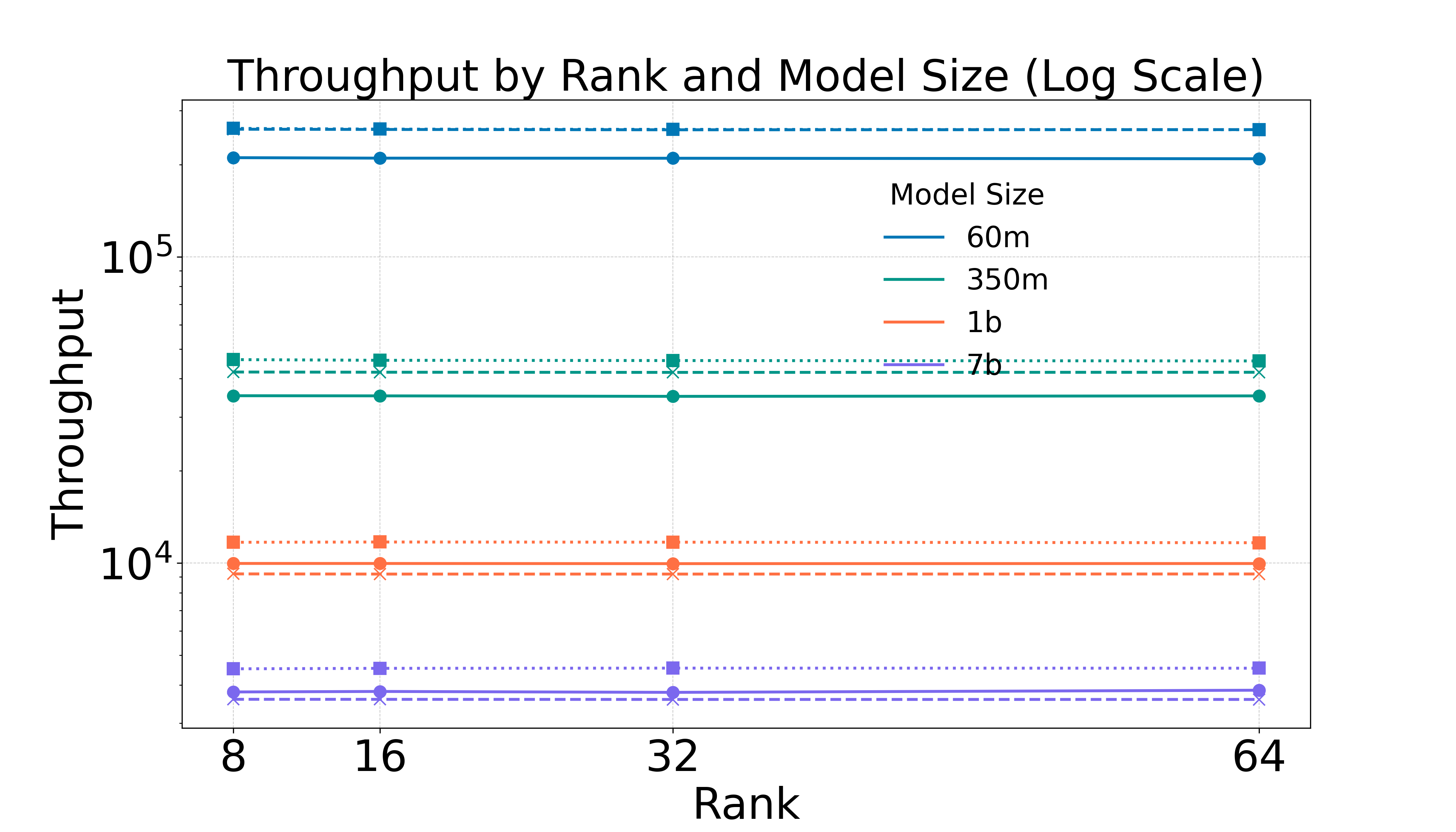}
    \caption{Rank vs LoRA Normalized Finetuning Throughput for \textsc{Grass} and \GaLore{} across 60M, 350M, and 1B model sizes}
    \label{fig:finetuning_throughput_rank}
\end{figure}

\autoref{fig:finetuning_throughput_rank} compares the finetuning throughput across ranks 8, 16,32, and 64 for the \textsc{Grass}, \GaLore{}, and LoRA baselines. 
For the ranks commonly used for finetuning (8-64) the throughput advantage of \textsc{Grass} remains about the same.

\section{Experiments: Additional Ablations}

\paragraph{Comparison with other baselines}
In \autoref{tab:comparison_sketch}, we report the validation perplexity of various other baselines on a LLaMA 1B pretraining task on the RealNews subset of C4. The attention and feedforward layers in all models are projected to a rank of 256, or use low rank adapters of this rank. We find that the training perplexities are lower while the validation perplexities are higher than in \autoref{tab:sampling_comparison} for the 60M model due to overfitting on the RealNews dataset. All models use an update frequency of 200, and we tune the learning rate and scale factor $\alpha$ per model.

In addition to \textsc{Grass} and \GaLore{}, we also include the ReLoRA baseline~\citep{Lialin2023ReLoRAHT} without any full-rank training warmup, the \Flora{} baseline where $P$ has entries drawn from $\mathcal{N}(0,1/r)$, and the CountSketch baseline where $P^\top$ is a CountSketch matrix with $r$ rows with one nonzero entry from $\{\pm 1\}$ per column. The CountSketch projection has been previously applied to embedding layer gradients which are sparse in prior work~\citep{spring2019compressing}, but shows larger variance and poorer convergence rates for dense gradients.

\begin{table}[tb]
\centering
\begin{tabular}{@{}lcccc@{}} %
\toprule
& \textbf{Train Perp} & \textbf{Eval Perp} \\
\midrule
Full-Rank & 33.48 & 31.41 \\
\textsc{Grass} & \textbf{33.52} & 32.17 \\
\GaLore{} & 33.68 & \textbf{32.10} \\
ReLoRA & 34.30 & 34.19 \\
\Flora{} & 35.91 & 35.62 \\
CountSketch & 36.97 & 36.93 \\
\midrule 
\end{tabular}
\caption{Comparison of various baselines using 1B LLaMA model validation perplexity. All models are pretrained on 500M tokens of the RealNews subset of C4. $r/d_{model}$ is 256/2048. Best baseline is bolded.}
\label{tab:comparison_sketch}
\end{table}

We see that \textsc{Grass} is competitive with \GaLore{}, while ReLoRA, \Flora{}, and CountSketch fall short. One way to interpret this is in terms of variance of the gradient sketches--- \textsc{Grass} being data dependent and based on row norms can better approximate the gradient low rank subspace than a data agnostic sketch like \Flora{} or CountSketch \citep{woodruff2014sketching}.

\paragraph{\textsc{Grass} with Adafactor}

We pretrain the LLaMA 1B model with \textsc{Grass} and Full-rank in BF16 on the Realnews subset of C4 using the Adafactor optimizer~\citep{shazeer2018adafactor} as an alternative to Adam for \texttt{opt}. Adafactor achieves sub-linear memory cost by factorizing the second-order statistics using a row-column outer product.

For \textsc{Grass} we use learning rate $0.005$, $\alpha=0.25$, $r=256$, $K=200$, batch size $512$, optimizer restart with a restart warmup of $100$ steps and no initial warmup.
For Full-rank training, we use learning rate $0.0005$, batch size $512$, and $1000$ initial linear learning rate warmup steps.

In \autoref{fig:adafactor} we report the train perplexity and see that \textsc{Grass} is within 1 perplexity point of Full-rank, demonstrating its ability to work with other inner off-the-shelf optimizers beyond Adam.

\begin{figure}[tb]
    \centering
    \includegraphics[width=\linewidth]{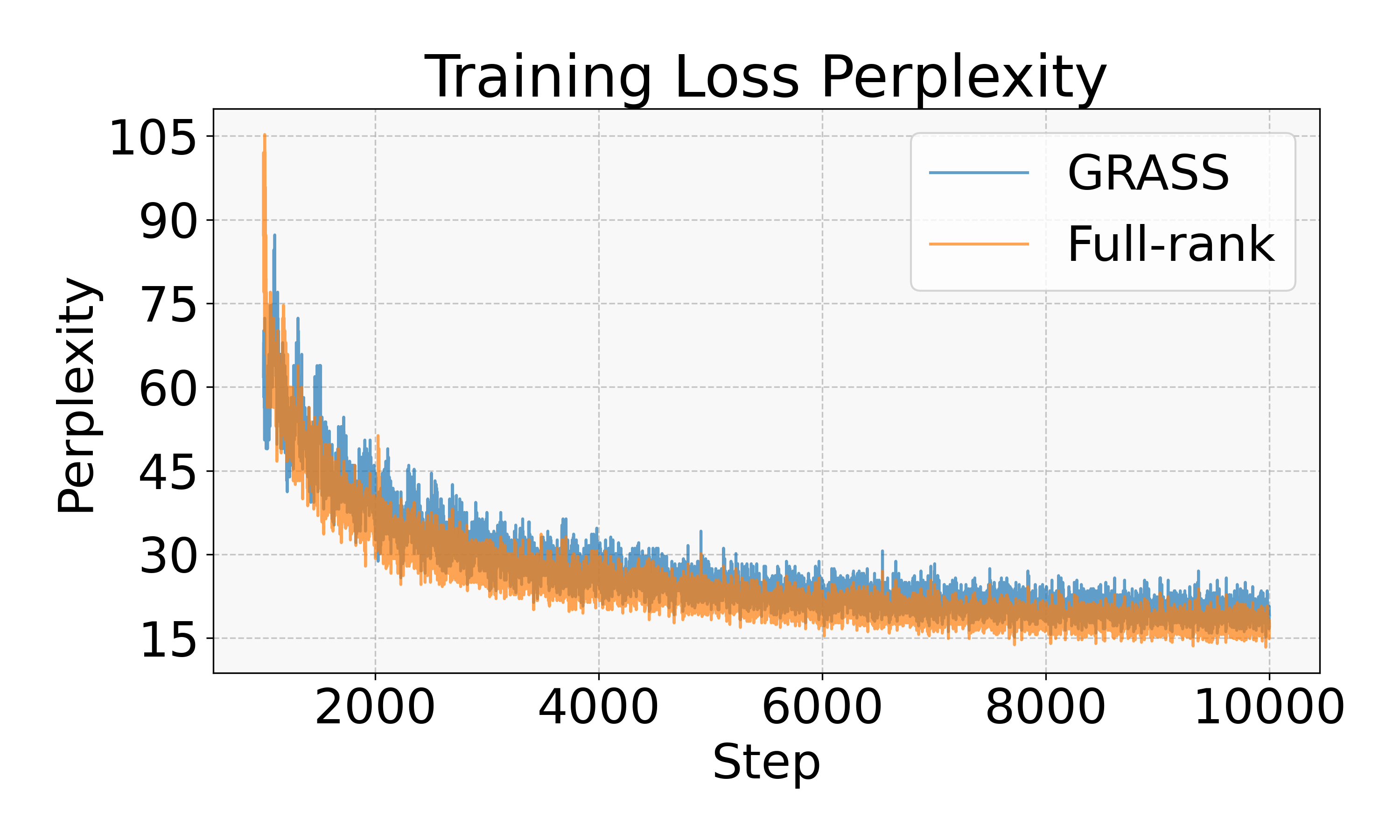}
    \caption{Pretraining LLaMA 1B on Realnews C4 subset with Adafactor.}
    \label{fig:adafactor}
\end{figure}

\looseness=-1
\paragraph{Coverage of indices.} In \autoref{fig:coverage}, we plot the coverage defined as the union of indices selected over $n$ update projection steps divided by the total indices per layer. We plot the coverage for the 60M LLaMA model pretrained on the C4 RealNews subset, for $n=15$ updates with $K=200$ steps between updates. Here, with the rank $128$ and the the number of rows $m=512$, a uniform sampling with replacement over 15 iterations should on average cover $1 - \left(\left(1 - \frac{1}{512}\right)^{128}\right)^{15} \approx 97.66\%$ of all the 512 indices in each layer. Empirically, all sampling methods exhibit good coverage with the Multinomial-Norm$^2$-NR being close to uniform. Top-$r$ and Multinomial-Norm$^2$-R oversample indices in certain layers, suggesting potential areas for further investigation into their utility in pruning strategies.

\begin{figure}[tb]
    \centering
    \includegraphics[width=\linewidth]{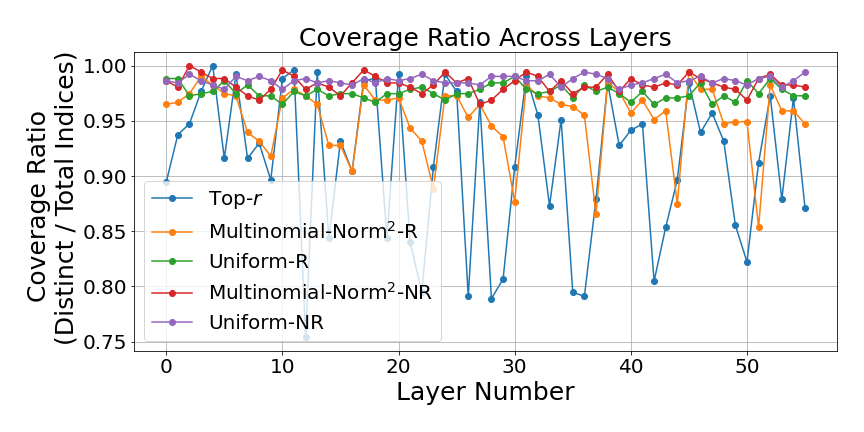}
    \caption{Per layer indices coverage (Distinct/Total) for the sampling strategies across 100 pretraining iterations. }
    \label{fig:coverage}
\end{figure}

In \autoref{fig:multinr_heatmap} and \autoref{fig:topk_heatmap} we plot the aggregated sampled indices over 15 iterations of 60M LLaMA pretraining on the RealNews subset of C4. We see that while Multinomial-Norm$^2$-NR and Top-$r$ attain similar performance in terms of perplexity, the sampled indices can be quite different, with Top-$r$ tending to oversample indices in particular layers.

\begin{figure}[t]
    \centering
    \begin{minipage}{0.49\textwidth}
        \centering
        \includegraphics[width=\linewidth]{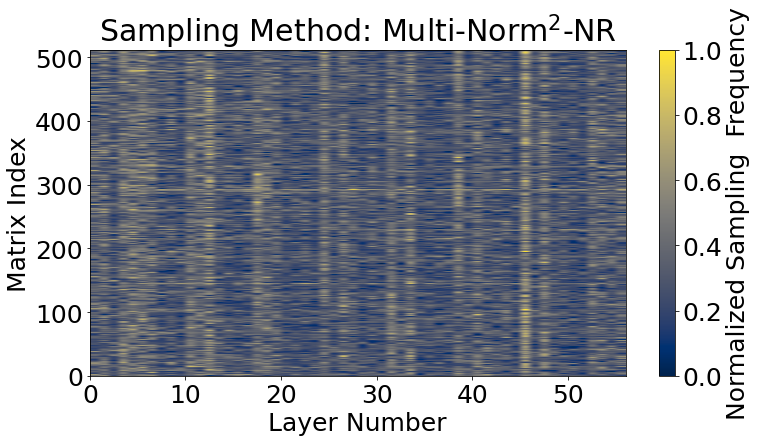}
        \caption{Multinomial-Norm$^2$ Sampling without Replacement: Heatmap of indices sampled for the different layers across 15 iterations of LLaMA 60M C4 pretraining.}
        \label{fig:multinr_heatmap}
    \end{minipage}\hfill
    \begin{minipage}{0.49\textwidth}
        \centering
        \includegraphics[width=\linewidth]{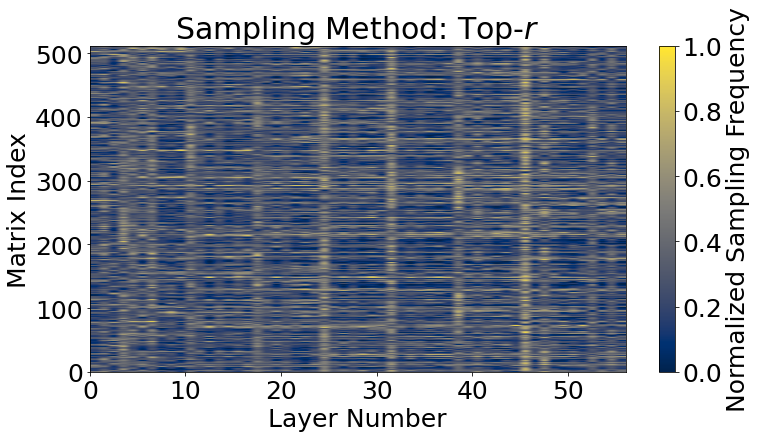}
        \caption{Top-$r$ Selection: Heatmap of indices sampled for the different layers across 15 iterations of LLaMA 60M C4 pretraining.}
        \label{fig:topk_heatmap}
    \end{minipage}
\end{figure}

\end{document}